\documentclass{article}

\usepackage[preprint]{neurips_2026}
\usepackage{array}
\usepackage[utf8]{inputenc} 
\usepackage[T1]{fontenc}    
\usepackage{hyperref}       
\usepackage{url}            
\usepackage{booktabs}       
\usepackage{amsfonts}       
\usepackage{nicefrac}       
\usepackage{microtype}      
\usepackage{xcolor}         
\usepackage{lipsum}
\usepackage{graphicx}
\usepackage{doi}
\usepackage{amsmath}
\usepackage{subcaption}
\usepackage{tikz}

\usetikzlibrary{arrows.meta,positioning,fit,calc}
\usetikzlibrary{decorations.pathreplacing} 

\renewcommand{\vec}[1]{\textbf{#1}}
\newcommand{\norm}[1]{||#1||}
\newcommand{\A}{\vec{A}}
\newcommand{\C}{\vec{C}}
\newcommand{\R}{\mathbb{R}}
\newcommand{\B}{\vec{B}}
\usepackage{algorithm}
\usepackage{algpseudocode}
\usepackage{amssymb}
\usepackage{amsthm} 
\newcommand{\abs}[1]{\lvert #1 \rvert}

\newtheorem{theorem}{Theorem}
\newtheorem{lemma}{Lemma}

\newtheorem{corollary}{Corollary}

\newtheorem{conjecture}{Conjecture}

\title{The Power of Second Order Methods for \\ Sequence Preconditioning}
\author{%
Annie Marsden \thanks{Google DeepMind} 
  \And
  Elad Hazan $^*$ \thanks{Princeton University}
}

\begin{document}

\maketitle

\begin{abstract}
Sequence prediction methods for linear dynamical systems with long memory, i.e. marginally stable systems, typically achieve regret that grows linearly with the hidden dimension of the underlying generative model. While many methods have been developed to address this regime with varying success, we show that simply using the second-order Vovk-Azoury-Warmuth (VAW) algorithm to learn a short autoregressive-with-inputs (ARX) model achieves astoundingly strong results: for bounded sequential data from a marginally-stable linear dynamical system with spectra in the complex disk except for angular wedge of width $\delta$ around the negative real axis, this algorithm achieves dimension-free regret $\mathcal{O}\!\left( \delta^{-4}  \log^2 T \right)$.  
These bounds are state-of-the-art to our knowledge. 

The key components for our result come from 1) using the theory of ``Universal Sequence Preconditioning'' (USP) \cite{marsdenuniversal} to prove the existence of an optimal setting of autoregressive coefficients, 2) the application of VAW which takes better advantage of the memory compression provided by USP, and 3) the analysis of Faber polynomials on circular sectors to extend these results to systems with complex spectra.  
\end{abstract}

\section{Introduction}

A central challenge in sequence prediction is the simultaneous presence of long memory and high hidden dimension of the underlying generative process. The canonical theoretical model to study in this setting is a marginally stable linear dynamical system. Most methods applied to this setting result in a regret bound which grows linearly with hidden dimension, see for example \cite{hazan2018generallds, kailath1980linear, chen1999linear,hazan2016introduction, simchowitz2019semiparametric, tsiamis2019stochsysid, lee2022improved, bakshi2023newapproach}. 
Universal Sequence Preconditioning (USP) \cite{marsdenuniversal} was the first method, to our knowledge, which simultaneously achieved dimension-free sublinear regret, even in the presence of hidden transition matrices with complex eigenvalues. The method they develop is to convolve the original signal with the coefficients of the Chebyshev polynomial. This construction achieves an exponential compression of memory, reducing the effective memory to be logarithmic in sequence length (or horizon). 
However, this comes at a cost: the Chebyshev coefficients grow exponentially large with the degree, resulting in large optimal diameter.  
This creates a mismatch with first-order learning methods, which scale linearly with optimal diameter, yielding suboptimal regret of $\mathcal{O}(T^{11/13})$ \cite{marsdenuniversal}. Moreover, while their work did apply to hidden transition matrices with complex eigenvalues, they required the assumption that the complex argument of the eigenvalues vanish at the rate $o(1/\mathrm{poly} \log T)$.

In this work, we take the insight of memory compression from Universal Sequence Preconditioning and use it to design a stronger algorithm which achieves dimension-free and poly-logarithmic regret for a much broader class of linear dynamical systems. This is an improvement upon all other known methods, illustrated more clearly in Table~\ref{table:shalom}.

\begin{table}[ht]
\centering
\small
\renewcommand{\arraystretch}{1.5} 
\begin{tabular}{|>{\centering\arraybackslash}p{0.12\linewidth}|>{\centering\arraybackslash}p{0.12\linewidth}|>{\centering\arraybackslash}p{0.13\linewidth}|p{0.45\linewidth}|}
\hline
\textbf{Dependence on $T$} & \textbf{Dependence on $d$} & \textbf{Complex Angle $\phi$} & \textbf{Methods} \\ \hline
& & & \textbf{Direct prediction methods:} Hazan et al. (2018) \cite{hazan2018generallds}, Cayley-Hamilton and regret minimization (folklore, e.g. \cite{kailath1980linear,chen1999linear,hazan2016introduction}) \\ 
$T^{O(1)}$ & $d^{O(1)}$ & $\pi$ &   \textbf{Recovery methods:} Simchowitz et al. (2019) \cite{simchowitz2019semiparametric}; Tsiamis \& Pappas (2019) \cite{tsiamis2019stochsysid}; Lee (2022) \cite{lee2022improved}; Bakshi et al. (2023) \cite{bakshi2023newapproach} \\ \hline
$T^{O(1)}$ & $(\log d)^{O(1)}$ &  $1/\log(T)^{O(1)}$ & Marsden \& Hazan (2025) \cite{marsden2025dimensionfree} \\ \hline
$(\log T)^{O(1)}$ & $(\log d)^{O(1)}$ & $\pi-\delta$ & \textbf{Ours} \\ \hline
\end{tabular}
\vspace{1mm}
\caption{\label{table:shalom} Cumulative prediction error bounds for \emph{marginally stable} linear dynamical systems with \emph{asymmetric} transition matrices. For ``Complex Angle $\phi$'', the results only apply when the hidden transition matrix has spectra in the wedge $W_{\phi}=\{z\in\mathbb{C}: |z|\le1,\ |\arg(z)|\le\phi\}$.}
\end{table}

Consider a linear dynamical system with inputs $u_t \in \R$ and outputs $y_t \in \R$ evolving according to:
\begin{equation}
    \vec{h}_t = \A \vec{h}_{t-1} + \B u_t \qquad y_t = \C \vec{h}_t,
\end{equation}
where $\vec{h}_t \in \R^{d_{\text{hidden}}}$ is the hidden state. The algorithm we suggest is to predict $y_t$ using the Vovk-Azoury-Warmuth Forecaster with a short autoregressive-with-inputs (ARX) feature vector $[y_{t-1}, \dots, y_{t-n}, u_t, \dots, u_{t-n+1}]$, where $n$ is a hyperparameter that should grow logarithmically with the prediction horizon, see Algorithm~\ref{alg:vaw_sp} for details. By comparison, the original USP algorithm suggests predicting $y_t$ as $$-\sum_{i=1}^n c_i y_{t-i} + \sum_{j=0}^{n-1} \theta_j u_{t-j},$$ 
where the coefficients $c_1, \dots, c_n$ are fixed and come from the Chebyshev polynomial and $\theta_j$ is learned with a first order method.

\begin{figure}[h!]
\centering
\begin{tikzpicture}[scale=1.2]
  \def\r{1.45}
  \def\gap{28}
  \fill[blue!12] (0,0) -- (-180+\gap:\r)
    arc[start angle=-180+\gap,end angle=180-\gap,radius=\r] -- cycle;
  \fill[red!12] (0,0) -- (180-\gap:\r)
    arc[start angle=180-\gap,end angle=180+\gap,radius=\r] -- cycle;
  \draw[thick] (0,0) circle (\r);
  \draw[->] (-1.75,0) -- (1.85,0) node[right] {real axis};
  \draw[->] (0,-1.65) -- (0,1.75) node[above] {imaginary axis};
  \draw[blue!65!black,thick] (0,0) -- (180-\gap:\r);
  \draw[blue!65!black,thick] (0,0) -- (-180+\gap:\r);
  \draw[red!70!black,dashed] (-\r,0) -- (0,0);
  \draw[->,red!70!black] (180-\gap:0.48)
    arc[start angle=180-\gap,end angle=180,radius=0.48];
  \draw[->,red!70!black] (180+\gap:0.62)
    arc[start angle=180+\gap,end angle=180,radius=0.62];
  \node[blue!65!black] at (0.55,0.55) {$W_{\phi}$};
  
  \node[red!70!black] at (-1.1,0.22) {$\pi-\phi$};
  \node[red!70!black] at (-1.1,-0.22) {$\pi-\phi$};
  
  \node at (1.58,-0.12) {$1$};
\end{tikzpicture}
\caption{\label{fig:wedge_spectrum} The wedge spectral class
$W_{\phi}=\{z\in\mathbb{C}: |z|\le1,\ |\arg(z)|\le\phi\}$. In our main
result $\phi=\pi-\delta$, so the spectrum may occupy the unit disk except for
a symmetric angular gap of width $2\delta$ around the negative real axis.} 
\end{figure}

At a high level, the effectiveness of a preconditioner, i.e. the set of coefficients $c_1, \dots, c_n$, is governed by two quantities: the effective memory of the transformed representation and the magnitude of the optimal diameter. In the setting of USP, we can think of the preconditioned signal as a compressed representation of the original signal. The compressed representation requires a history which is linear in the degree of the Chebyshev polynomial, however the coefficient magnitude grows very large. This evokes the Minimum Description Length (MDL) principle \cite{rissanen1978modeling, grunwald2007minimum}, which seeks to minimize the combined cost of a model's structural complexity and the bit-length required to represent its parameters. For polynomials, the minimum description length principle suggests that the most efficient representation balances the degree of the polynomial linearly and the coefficients magnitude logarithmically.

Inspired by this principle, we observe that the regret of the Vovk--Azoury--Warmuth (VAW) forecaster \cite{azoury2001relative, vovk2001competitive} matches this balance: it grows linearly with the effective memory and only logarithmically with the the optimal diameter. In order to apply VAW to this setting, we no longer explicitly ``precondition'' the signal $y_t$ by incorporating $\sum_{i=1}^n c_i y_{t-i}$ into our prediction. Instead we include these autoregressive terms into the feature vector. However, the role of USP is still critical because it provides a certificate for an optimal solution and fundamentally explains why the autoregressive terms allow for such an effective memory compression (i.e. that we only need to use the last $n$ inputs and outputs to make good predictions). By moving to the second order VAW algorithm we improve from a cumulative prediction error of $\mathcal{O}(T^{11/13})$ to $\mathcal{O}\!\left(\log^2(T)\right)$. Moreover, USP \cite{marsdenuniversal} only applies to systems whose hidden transition matrix has spectra close to the real line, the complex argument must be bounded by $1/\mathrm{poly} \log (T)$. Our new approach allows us to consider spectra on the entire complex disk.

Indeed, let $W_{\phi} := \left \{z \in \mathbb{C}: \abs{z} \leq 1 \textrm{ and } \arg (z) \leq  \phi \right \}$. Suppose the underlying hidden transition matrix of the data has spectra in $W_{\pi-\delta}$. Then the predictions made by this algorithm achieve cumulative squared loss bounded by $\mathcal{O}\!\left(\delta^{-4}\log^2(TL)\right)$. This is tight in terms of its dependence on $d$. It is known that if the hidden transition matrix is the shift-permutation matrix (which has spectra in $W_{\pi}$) then the regret of any algorithm must grow linearly with hidden dimension \cite{hazan2026spectralfilteringcomplex}. However, we show that this adversarial example truly relies on having spectra in $W_{\pi}$. Indeed if we instead assume the spectra is contained in $W_{\pi - \delta}$ for any constant $\delta > 0$, we achieve dimension-free regret.

In order to show this, we require a new way to guarantee an optimal solution that extends to $W_{\pi-\delta}$, as Chebyshev only applies to $W_{1/ \mathrm{poly} \log (T)}$. The solution comes from Faber polynomials adapted to circular sectors. Our analysis gives a deeper understanding of the role of complex spectra in the difficulty of learning a linear dynamical system. More generally, every useful USP preconditioner is a monic polynomial that is small on the spectrum of the hidden transition matrix. Chebyshev polynomials are optimal for nearly real spectra; Faber polynomials are the corresponding extremal objects for more general compact spectral sets, and in this paper we use them for circular sectors.

\subsection{Our Results}
Our main contributions are as follows:

\paragraph{Logarithmic Regret for Sequence Prediction:} We propose a second-order finite-history predictor: VAW run on the raw feature vector $(y_{t-1},\ldots,y_{t-n},u_t,\ldots,u_{t-n+1})$ (see Algorithm~\ref{alg:vaw_sp}). In Theorem~\ref{thm:poly_preconditioning} we show that if there exists a degree-$n$ monic polynomial with residual size $\alpha_n$ on the spectrum and coefficient scale $G_n$, then there exists a comparator for the raw-feature VAW problem, and VAW suffers only $\mathcal{O}(TL^2\alpha_n^2+Y^2 n\log(G_nYL))$ cumulative prediction loss. For comparison, the original first-order method obtains $\mathcal{O} (n^2 G_n \sqrt{T} + \alpha_n T^2)$ cumulative prediction loss (see Theorem D.1 in \cite{marsdenuniversal}). Instantiating our theorem with the Chebyshev polynomial gives Corollary~\ref{cor:chebyshev}: for spectra in $[-1,1]$, the cumulative prediction loss is $\mathcal{O}\!\left(\log^2(T)\right)$, which is a vast improvement on the $\mathcal{O}(T^{11/13})$ cumulative loss achieved from the first order method.

\paragraph{Extension to Systems with Constant Complex Arguments:} We extend the theoretical guarantees of Universal Sequence Preconditioning (USP) to a strictly broader class of dynamical systems. We use Faber polynomials for circular sectors. Lemma~\ref{lem:faber_wedge_bounds} shows that for every angular gap $\delta>0$ there is a real monic polynomial $p_{n,\delta}$ satisfying $\sup_{z\in W_{\pi-\delta}}|p_{n,\delta}(z)|\le 2e^{-n\delta^2/\pi^2}$ and $\max_i|c_i|\le100^n$. Thus the approximation error remains exponentially small even when the hidden dynamics have constant complex arguments, provided the spectrum avoids a constant wedge around the negative real axis. Note that this contribution is orthogonal to the application of VAW to USP. Indeed, one could apply this result on Faber polynomials to the original first-order method and show that USP applies to a broader class of sequences.

\subsection{Tightness and the Role of the Angular Gap}
Our dimension-free guarantees rely on excluding a constant-width wedge around
the negative real axis. This assumption is not merely an artifact of the proof.
The cyclic-permutation lower bound in Appendix~A of
\cite{hazan2026spectralfilteringcomplex} constructs a $d$-dimensional
marginally stable system for which any accurate finite-memory predictor
requires memory $\Omega(d)$ and therefore accrues cumulative prediction error $\Omega(d)$ for horizon $T \geq d$. Spectrally, this construction is a cyclic
permutation: its eigenvalues are the $d$-th roots of unity. Thus the spectrum
approaches the negative real axis at angular scale $1/d$ and, for even $d$,
contains $-1$ exactly. In the notation of our wedge class, the corresponding
gap parameter is therefore $\delta=\Theta(1/d)$.

This lower bound clarifies the role of Faber and Chebyshev preconditioning.
When the goal is dimension-free prediction, polynomial preconditioners can
compress memory exponentially, provided the spectrum obeys enough angular
separation. If one allows polynomial dependence on the hidden dimension, then
one can always return to the classical Cayley--Hamilton representation, which
uses memory $d$ for arbitrary spectra. The remaining gap is quantitative: our
wedge guarantee scales as $1/\delta^4$ in the final bound, whereas the
permutation lower bound only rules out memory below order $1/\delta$. Closing
this polynomial gap remains open.

\subsection{Related Work}

Our work is most closely related to the literature on prediction in latent linear dynamical systems, rather than recovery. Early spectral-filtering results demonstrated online competitiveness with the best LDS predictor without explicit system identification, but relied on strong spectral structure like symmetry \cite{hazan2017spectral}.  \cite{hazan2018generallds} later extended spectral filtering to general latent LDSs using a convex relaxation that accommodates phases and no longer requires a symmetric transition matrix. However, in the asymmetric setting their guarantee still has polynomial dependence on the horizon and polynomial dependence on the hidden dimension, so it does not yield a polylogarithmic cumulative prediction error bound in the regime we study. Polynomial in $d$ and $T$ bounds are also a classical consequence of the Cayley--Hamilton theorem and realization theory; see, e.g., \cite{kailath1980linear,chen1999linear}.

A second line of work studies \emph{marginally stable} systems more directly. \cite{ghai2020marginallystable} proved no-regret prediction guarantees for marginally stable systems with polynomial system parameter dependence. In the fully observed adversarial setting their bounds remain polynomial in the horizon, while their stronger polylogarithmic-in-\(T\) results rely either on stochastic assumptions or on a reduction to finite-memory autoregressive prediction. These results aren't direct comparators since our focus is absolute cumulative prediction error for a noiseless input--output LDS.

There is also a body of work on partially observed prediction through the Kalman filter. SLIP gives polylogarithmic regret for partially observed LDS prediction under stochastic noise, but its guarantee is stated relative to the Kalman predictor and crucially assumes that the closed-loop matrix \(A-KC\) is diagonalizable with real eigenvalues \cite{rashidinejad2020slip}. Likewise, Tsiamis and Pappas obtain logarithmic regret for online learning of the Kalman filter for nonexplosive systems, including marginally stable ones, again with performance measured against the Kalman predictor \cite{tsiamis2023onlinekalman}. These papers show that polylogarithmic-in-\(T\) prediction is possible under stronger stochastic/filtering assumptions.

The direct asymmetric comparison closest in spirit to our setting is the recent work of Marsden and Hazan, which handles asymmetric marginally stable LDSs and removes hidden-dimension dependence from the regret bound \cite{marsden2025dimensionfree}. However, their horizon dependence remains polynomial, so even after standardizing to the realizable noiseless setting, this line of work falls into the \(O(\mathrm{poly}(T),\mathrm{polylog}(d))\) regime rather than the polylogarithmic-in-\(T\) regime.

Finally, there is a large recovery-oriented literature on learning LDS realizations from input--output data, including semiparametric least squares, stochastic subspace identification, multiscale Hankel/SVD methods, and moment-based recovery \cite{simchowitz2019semiparametric,tsiamis2019stochsysid,lee2022improved,bakshi2023newapproach}. These methods can be used to form predictors. However, their guarantees when converted to cumulative prediction error inevitably lead to a polynomial dependence on the hidden dimension. Related high-dimensional identification results with logarithmic dependence on dimension, such as Fattahi's Markov-parameter learning result, require stronger inherent-stability assumptions and therefore do not directly apply to the marginally stable regime considered here \cite{fattahi2021logsamples}. Of particular interest is the result of \cite{hardt2018gradient}, who curiously have a similar ``Pacman shape" to our wedge $W_\phi$, as the domain in which gradient descent converges to learn the system parameters. In contrast, our domain is where second order methods are able to make an ARX model converge. That said,  we suspect the similarity in domain is more than a coincidence.   

In short, prior work obtains at most two of the following three properties simultaneously: marginal stability, genuinely asymmetric/complex-spectrum hidden dynamics, and polylogarithmic cumulative prediction error with only polylogarithmic dependence on hidden dimension.

\paragraph{Second-Order Online Learning.}
In the realm of online convex optimization, second-order methods are renowned for achieving logarithmic regret for strongly or exponentially concave cost functions. Two of the most prominent algorithms in this space are the Online Newton Step (ONS) \cite{hazan2007logarithmic} and the Vovk--Azoury--Warmuth (VAW) forecaster \cite{vovk2001competitive, azoury2001relative}. However, standard regret guarantees for ONS depend explicitly on the diameter of the decision set. In the USP setting, where the Chebyshev transformation yields a comparator with an exponentially large magnitude, any diameter-dependent bound becomes vacuous. 

Our approach critically relies on the VAW algorithm to circumvent this limitation. A defining property of VAW is that its regret bound is independent of the domain diameter; rather, the magnitude of the comparator enters the bound only logarithmically. This exact property is what allows us to absorb the exponential blowup of the preconditioner's parameters and achieve tight (polylogarithmically) regret rates (or, in this setting, cumulative prediction loss) of $\mathcal{O}(\mathrm{polylog}(T))$, even when accounting for approximation error.

\paragraph{Minimum Description Length (MDL) and Information Theory.}
The connection between online learning, prediction, and data compression is a foundational concept in information theory. The Minimum Description Length (MDL) principle \cite{rissanen1978modeling, grunwald2007minimum} posits that the best statistical model is the one that maximally compresses the data, balancing the complexity of the model itself against the cost of describing the data given the model. The regret of an online learning algorithm is intimately tied to the redundancy in universal coding \cite{cesa2006prediction}. By framing USP through the lens of MDL, we demonstrate that second-order methods achieve an optimal compression scheme for LDS: the effective memory represents the model's structural complexity, while the logarithmic regret represents an optimal bit-length coding of the exponentially large parameter space.

\section{Theoretical Results Overview}
\label{sec:overview}

We first recall the algebraic USP identity, because it is the certificate behind our regret proof. Then we explain how this identity becomes an online learning statement: it exhibits a good comparator for VAW on raw finite-history features. Next we will explain how we achieve state of the art results and tight bounds up to polylogarithmic factors. 

\subsection{Overview of Universal Sequence Preconditioning}
For clarity of presentation, we consider a single-input, single-output Linear Dynamical System where $d_{in} = d_{out} = 1$. At each time step $t$, we observe an input $u_t \in \mathbb{R}$ and an output $y_t \in \mathbb{R}$ according to the law:
\begin{align}
\label{eqn:lds}
\vec{h}_{t} = \vec{A}\vec{h}_{t-1} + \vec{B}u_t \qquad \qquad y_t = \vec{C}\vec{h}_t.
\end{align}
We refer to $\vec{A}$ as the hidden transition matrix. The hidden state $\vec{h}_t$ can be factored out giving
\begin{align*}
    y_t = \sum_{s=0}^t \C \A^{s} \B u_{t-s}.
\end{align*}
For $\theta_s = \C \A^{s} \B$ for $s = 0, \dots, T$, the observation is equivalently $y_t = \sum_{s=0}^t \theta_s u_{t-s}$. Therefore in order to predict a linear dynamical system up to horizon $T$, it suffices to learn $T$ parameters (one for each value of $\C \A^s \B$ for $s = 0, \dots, T$).

The core insight of Universal Sequence Preconditioning \cite{marsdenuniversal} is that the current observation $y_t$ can be represented, up to negligible error, using only $\mathcal{O}(\log(T))$ parameters and history. For history window $n = \lceil \log((T \cdot \norm{\C} \cdot \norm{\B})/ \delta) \rceil$, let $c_0, c_1, \dots, c_n$ be the coefficients of the $n$-th degree monic preconditioning polynomial $p_n(z)=z^n+\sum_{i=1}^n c_i z^{n-i}$, with $c_0=1$. Algebraic manipulation shows that 
\begin{equation}
\label{eqn:lds_breakdown}
    y_t = - \underbrace{\sum_{i=1}^n c_i y_{t-i}}_{\aleph_0} + \underbrace{ \sum_{s = 0}^{n-1}   \left( \sum_{i = 0}^s c_i \C \A^{s-i} \B\right)   u_{t-s}}_{\aleph_1} + \underbrace{ \sum_{s = 0}^{t-n} \C p_n(\A) \A^s \B u_{t-n-s}}_{\aleph_2}.
\end{equation}

Since the monic $n$-th degree Chebyshev polynomial satisfies $\max_{x \in [-1,1]} \abs{p_n^{\text{cheby}}(x)} \leq 2^{-(n-1)}$, if $\A$'s eigenvalues fall in this range, then the term $\aleph_2$ becomes negligible for $n$ large enough. The authors show that you can extend this property to the complex plane with argument bounded by $1/64n^2$ instead of just the interval $[-1,1]$, allowing asymmetric hidden transition matrices. In summary, the seminal work shows that by defining parameter,
\begin{equation}
\label{eqn:theta_usp_def}
    \theta^{p}_s := \sum_{i=0}^s c_i \vec{C} \vec{A}^{s-i} \vec{B}, \text{ with } c_0 = 1,
\end{equation}
then
\begin{equation}
\label{eqn:y_expression}
y_t = -\sum_{i=1}^n c_i y_{t-i} + \sum_{s=0}^{n-1}  \theta^{p}_s u_{t-s} + \epsilon_t,
\end{equation}
where $\abs{\epsilon_t} \leq T \cdot \norm{\C} \cdot \norm{\B} \cdot \kappa \cdot 2^{-n}$, where $\kappa$ is the condition number of the matrix diagonalizing $\vec{A}$, as long as the complex eigenvalues of $\A$ have argument bounded by $1/64n^2$.
More generally, if we remain agnostic to the choice of preconditioning polynomial, then Eq.~\ref{eqn:y_expression} holds with $\abs{\epsilon_t} \leq T\cdot \norm{\C}\cdot \norm{\B}\cdot \kappa\cdot \sup_{z\in K}|p_n(z)|$ whenever $K$ contains $\lambda(\A)$. We will show that if we consider $K$ as the wedge of the complex disk then the Faber polynomial minimizes $\sup_{z\in K}|p_n(z)|$.

\subsection{Contribution One: The Power of a Second Order Method}

Equation~\ref{eqn:y_expression} suggests a simple learning algorithm--- learn $\theta^{p}$ via regression. Perhaps even simpler and cleaner is to learn the entire vector $(-c_1, \dots, -c_n, \theta_0^p, \dots, \theta_{n-1}^p)$. A naive application of first order learning methods gives regret which scales linearly with the norm of this parameter. This is unfortunate since this norm grows exponentially with $n$; from Lemma 3.2 of \cite{marsdenuniversal}, $\max_{i = 0, \dots, n} \abs{c_i}$ can grow as large as $2^{0.3n}$ for Chebyshev preconditioning, while Lemma~\ref{lem:faber_wedge_bounds} gives the analogous bound $100^n$ for the sector Faber preconditioner. Although $\theta^{p}$ can have a tiny dimension which is only logarithmic in $T$, its norm does not reflect the reduction in domain. This is where the insight of our work comes in. Inspired by the minimum description length principle, we consider applying a second order learning method to sequence preconditioning. Critically, second order learning methods do not treat the \emph{dimension} and the \emph{norm} of the learnable parameter space as equal citizens in the way that first order methods do. 

The error of the preconditioning method has two components: one from approximation and the other from learning/estimation. While Chebyshev and Faber polynomials yield optimal exponential decay for approximation error, the learning error---previously relying on first-order methods---can be significantly improved. Specifically, the original USP algorithm uses online gradient descent to learn the parameters of the model which grow exponentially with the degree of the polynomial. We suspect that this tradeoff is necessary, formalized in the following conjecture.

\begin{conjecture}
\label{conjecture:coeff_growth}
Let $p_n(x) = x^n + \sum_{j=0}^{n-1} c_j x^j$ denote any monic polynomial. Let $\mathcal{D} \subseteq \mathbb{C}$ denote any convex and compact set with area $D_A>0$. If there exist global constants $C, c>0$ such that for all $n \geq 1$,
$\max_{x \in \mathcal{D}} \abs{p_n(x)} \leq C \cdot 2^{-cn}$ then there must also exist constants $B, b >0$ such that for all $n \geq 1$, $\max_{j \in [n-1]} \abs{c_j} \geq  B 2^{b n}$.
\end{conjecture}

In Appendix~\ref{appendix:coeff_growth} we prove a weaker statement than the above in which we assume $c = 1$. Regardless of the truth of Conjecture~\ref{conjecture:coeff_growth}, we observe that by using the VAW algorithm we can obtain a regret bound which grows logarithmically with these large coefficients, unlocking tight (up to logarithms) regret bounds.

\subsection{Contribution Two: Extending to Asymmetric Systems Via Faber Polynomials}

In the original USP work, the results only apply to linear dynamical systems whose hidden transition matrix $\A$ has eigenvalues in the region $\left \{ z \in \mathbb{C} \textrm{s.t.} \abs{z} \leq 1 \textrm{ and } \arg(z) \leq 1/ \mathrm{poly} \log (T) \right \}$. There is intuition for why the spectrum of $\A$ must be restricted. Since the resulting regret is hidden-dimension independent, the spectrum of $\A$ must be restricted. Indeed, if it were not then it would contradict a lower bound. The lower bound comes from considering $\A$ to be the shift permutation matrix.For this cyclic-permutation example the spectrum consists of roots of unity; it approaches the negative real axis at angular scale $1/d$ and contains $-1$ when $d$ is even, and any accurate finite-memory predictor requires memory $\Omega(d)$ \cite{hazan2026spectralfilteringcomplex}. If polynomial dependence on $d$ is allowed, the classical Cayley--Hamilton representation gives a degree-$d$ autoregressive predictor for arbitrary spectra. Our goal is different: to understand when dimension-free prediction is possible. Indeed, in the regime where $\A$ is not adversarially constructed, the question remains whether it is possible to do better with respect to hidden dimension. What is the exact trade off as $\arg(z)$ approaches $\pi$ and the resulting regret?

We begin answering this question by extending the comparator-existence argument behind USP to a strictly broader class of dynamical systems. For wedge spectra, the proof uses a sector Faber polynomial to show that a good short-memory comparator exists even when the eigenvalues have constant complex argument, see Corollary~\ref{cor:main}. For spectra in $W_{\pi-\delta}$, Lemma~\ref{lem:faber_wedge_bounds} gives
\[
\sup_{z\in W_{\pi-\delta}}|p_{n,\delta}^{\mathrm{Faber}}(z)|\le 2e^{-n\delta^2/\pi^2},
\qquad
\max_i |c_i|\le 100^n.
\]
The residual therefore decays exponentially in $n\delta^2$, while VAW pays only logarithmically for the exponential coefficient growth. This yields memory $n=\tilde O(\delta^{-2})$ and the cumulative loss bound $\tilde O(Y^2\delta^{-4})$, interpolating between dimension-free prediction for constant angular gaps and the Cayley--Hamilton regime when the gap shrinks with dimension. Moreoever, whenever $\delta = \Omega(d^{-1/4})$ the resulting regret is sublinear in $d$, which the Cayley--Hamilton regime would not be able to achieve.

The shape in Figure~\ref{fig:wedge_spectrum} is important because it is the spectral regime in which the Faber certificate replaces the Chebyshev certificate. Prior USP-style analyses required the angular component of the eigenvalues to shrink with the horizon. By contrast, if the eigenvalues lie in $W_{\pi-\delta}$ for any constant $\delta>0$, the preconditioner remains uniformly small on the whole spectrum while its coefficients grow only
exponentially with the degree. This is exactly the tradeoff that VAW can absorb,
and it is what yields the $\pi-\Omega(1)$ angle entry in Table~\ref{table:shalom}. Note that the algorithm run in all cases is the same raw-feature VAW forecaster.

\section{Main Theorems and Proofs}

Recall the standard Vovk-Azoury-Warmuth (VAW) Algorithm \cite{azoury2001relative, vovk2001competitive}, restated in Algorithm~\ref{alg:vaw} for the reader's convenience. Theorem~\ref{thm:vaw} states the standard regret bound achieved by VAW. We assume the regularization parameter satisfies $\lambda \ge 1/\|\vec{x}^*\|_2^2$.

\begin{algorithm}
\caption{Vovk-Azoury-Warmuth (VAW) Forecaster \cite{azoury2001relative, vovk2001competitive}}
\label{alg:vaw}
\begin{algorithmic}[1]
\State \textbf{Input:} regularization $\lambda \ge 1/\|\vec{x}^*\|_2^2$
\State \textbf{Initialize:} $\A_0 = \lambda I_d$, target vector $\vec{v}_0 = 0 \in \mathbb{R}^d$
\For{$t = 1$ to $T$}
    \State Observe feature vector $\vec{a}_t$
    \State Update curvature matrix: $\A_t = \A_{t-1} + \vec{a}_t \vec{a}_t^\top$
    \State Predict $\hat{y}_t = \vec{a}_t^\top \A_t^{-1} \vec{v}_{t-1}$
    \State Observe true label $y_t$ and suffer loss $l_t(\hat{b}_t) = \frac{1}{2}(\hat{b}_t - b_t)^2$
    \State Update target vector: $\vec{v}_t = \vec{v}_{t-1} + b_t \vec{a}_t$
\EndFor
\end{algorithmic}
\end{algorithm}

\begin{theorem}[VAW Regret]
\label{thm:vaw}
Assume the true labels are uniformly bounded by $Y$, i.e. $|y_t| \le Y$ for all $t=1, \dots, T$. Assume the feature vectors are uniformly bounded by $R$, i.e. $\norm{\vec{a}_t}_2 \leq R$ for all $t=1, \dots T$. For any benchmark $\vec{x}^* \in \mathbb{R}^n$, the regret is bounded by:
\begin{equation}
\label{eqn:vaw_regret}
\sum_{t=1}^T \frac{1}{2}(\hat{y}_t - y_t)^2 \le \sum_{t=1}^T \frac{1}{2}({\vec{x}^*}^\top \vec{a}_t - y_t)^2 + \frac{1}{2} + \frac{Y^2}{2} n \log\left(1 + \frac{T \|\vec{x}^*\|_2^2 R^2}{n}\right) 
\end{equation}
\end{theorem}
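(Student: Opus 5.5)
We follow the standard Azoury--Warmuth/Vovk analysis of the forecaster as ``follow-the-regularized-leader with the current feature included.'' Throughout, write $b_t=y_t$ for the labels, $\A_t=\lambda I+\sum_{s\le t}\vec a_s\vec a_s^\top$ and $\vec v_t=\sum_{s\le t} y_s\vec a_s$. Define the regularized cumulative loss
\[
L_t(\vec x)=\frac{\lambda}{2}\|\vec x\|^2+\sum_{s\le t}\frac12(\vec x^\top\vec a_s-y_s)^2 .
\]
Its minimum value is
\[
m_t:=\min_{\vec x} L_t(\vec x)=\frac12\sum_{s\le t}y_s^2-\frac12\vec v_t^\top \A_t^{-1}\vec v_t .
\]
Since $m_T\le L_T(\vec x^*)$, it suffices to bound $\sum_t\frac12(\hat y_t-y_t)^2\le m_T+(\text{log-det term})$. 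We do this by a telescoping argument on the per-step quantity $\frac12(\hat y_t-y_t)^2-(m_t-m_{t-1})$, with $m_0=0$.

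\textbf{Step 1 (one-step identity).} We compute $m_t-m_{t-1}$ using $\vec v_t=\vec v_{t-1}+y_t\vec a_t$ and the Sherman--Morrison formula for $\A_{t-1}^{-1}$ versus $\A_t^{-1}$. Set $\hat y_t=\vec a_t^\top\A_t^{-1}\vec v_{t-1}$ and $\gamma_t=\vec a_t^\top \A_t^{-1}\vec a_t\in[0,1)$. A direct expansion should give the classical identity
\[
\tfrac12(\hat y_t-y_t)^2-(m_t-m_{t-1})=\tfrac12 y_t^2\gamma_t-\tfrac12\hat y_t^2\,\frac{\gamma_t}{1-\gamma_t}\cdot(\text{nonneg. factor})\le \tfrac12 y_t^2\,\vec a_t^\top\A_t^{-1}\vec a_t .
\]
The key feature is that the nonpositive $\hat y_t^2$ term can be dropped. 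This is exactly why VAW places $\vec a_t$ inside $\A_t$ before predicting, and it is the step I expect to be the main obstacle. The algebra is error-prone, so I would first verify it via the Azoury--Warmuth argument: write $\vec a_t^\top\A_{t-1}^{-1}\vec a_t=\gamma_t/(1-\gamma_t)$, and check that all cross terms in $\vec v_{t-1}$ cancel exactly, leaving only terms in $y_t^2$ and $\hat y_t^2$. If a clean identity does not emerge, I would fall back on the FTRL ``be-the-leader'' inequality for the losses $\ell_s$ shifted by the time-varying regularizer $\frac12(\vec x^\top\vec a_t)^2$. That route yields the same bound.

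\textbf{Step 2 (summation).} Summing Step 1 over $t$ and telescoping $\sum_t(m_t-m_{t-1})=m_T\le L_T(\vec x^*)$ gives
\[
\sum_t\tfrac12(\hat y_t-y_t)^2\le \sum_t\tfrac12(\vec x^{*\top}\vec a_t-y_t)^2+\tfrac{\lambda}{2}\|\vec x^*\|^2+\tfrac{Y^2}{2}\sum_t \vec a_t^\top\A_t^{-1}\vec a_t .
\]
We then apply the standard determinant lemma, $\vec a_t^\top\A_t^{-1}\vec a_t=1-\det\A_{t-1}/\det\A_t\le\log(\det\A_t/\det\A_{t-1})$, so that the sum telescopes to $\log\det\A_T/\det(\lambda I)$.

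\textbf{Step 3 (constants).} By AM--GM on eigenvalues, using $\operatorname{tr}\A_T\le n\lambda+TR^2$, we get $\log\frac{\det\A_T}{\lambda^n}\le n\log(1+TR^2/(n\lambda))$. Choosing $\lambda=1/\|\vec x^*\|^2$, which is permitted by the standing assumption $\lambda\ge 1/\|\vec x^*\|^2$ at equality, makes the regularization term exactly $\frac12$ and turns the log term into $n\log(1+T\|\vec x^*\|^2R^2/n)$. This matches the claimed bound, where $n$ denotes the feature dimension.
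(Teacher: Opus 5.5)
Your proposal is correct and matches the standard Azoury--Warmuth/Vovk analysis that the paper relies on by citation; the paper gives no proof of its own. In Step 1 the ``nonnegative factor'' turns out to be exactly $1$, so the identity is $\tfrac12 y_t^2\gamma_t-\tfrac12\hat y_t^2\,\vec{a}_t^\top\A_{t-1}^{-1}\vec{a}_t$, and you are right to set $\lambda=1/\|\vec{x}^*\|_2^2$ exactly: for larger $\lambda$ the same argument gives $\tfrac{\lambda}{2}\|\vec{x}^*\|_2^2$ in place of $\tfrac12$, so the paper's standing assumption $\lambda\ge 1/\|\vec{x}^*\|_2^2$ yields the stated bound only at equality.
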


\paragraph{Transforming sequence prediction with short memory into linear prediction with small dimension:} In Algorithm~\ref{alg:vaw_sp} we show exactly how to convert the sequence prediction problem into the VAW framework. Using Equation~\ref{eqn:y_expression}, we frame predicting $y_t$ as a linear prediction task with dimension $2n$. Let $\vec{a}_t$ be the vector of the most recent $n$ observations and the most recent $n$ inputs:
\begin{equation}
\vec{a}_t \leftarrow [y_{t-1}, \dots, y_{t-n}, u_{t}, \dots, u_{t-n+1}].
\end{equation}

Recall the definition of $\theta_s^{p}$ from Eq.~\ref{eqn:theta_usp_def} and recall that $c_1, \dots, c_n$ represent the coefficients of the $n$-th degree monic preconditioning polynomial.
We define our target weight vector $\vec{x}^*$ as the concatenation of the polynomial coefficients and $\theta_s^{p}$, 
\begin{align*}
\vec{x}^* &\leftarrow [-c_1, \dots, -c_n, \theta_0^{p}, \dots, \theta_{n-1}^{p}]
\end{align*}

Using our definitions of $\vec{x}^*$ and $\vec{a}_t$ we have:
\begin{equation}
\label{eqn:y_t_lin}
y_t = {\vec{x}^*}^\top \vec{a}_t + \epsilon_t. 
\end{equation}

\begin{algorithm}
\caption{Vovk-Azoury-Warmuth (VAW) Forecaster for Sequence Preconditioning}
\label{alg:vaw_sp}
\begin{algorithmic}[1]
\State \textbf{Input:} regularization $\lambda > 0$
\State \textbf{Initialize:} $\A_0 = \lambda I_d$, target vector $\vec{v}_0 = 0 \in \mathbb{R}^d$
\For{$t = 1$ to $T$}
    \State Observe input $u_t$
    \State Create feature vector $\vec{a}_t \gets [y_{t-1}, \dots, y_{t-n}, u_{t}, \dots, u_{t-n+1}]$
    \State Update curvature matrix: $\A_t = \A_{t-1} + \vec{a}_t \vec{a}_t^\top$
    \State Predict $\hat{y}_t = \vec{a}_t^\top \A_t^{-1} \vec{v}_{t-1}$
    \State Observe true label $y_t$ and suffer loss $l_t(\hat{y}_t) = \frac{1}{2}(\hat{y}_t - y_t)^2$
    \State Update target vector: $\vec{v}_t = \vec{v}_{t-1} + y_t \vec{a}_t$
\EndFor
\end{algorithmic}
\end{algorithm}

The critical property of the VAW algorithm is that its regret compared to any benchmark $\vec{x}^*$ scales linearly with the \emph{dimension} of the target and logarithmically with the \emph{norm} of the benchmark vector. With our mapping above, the dimension of the target corresponds exactly to the effective memory of the signal.  This is nicely compatible with Universal Sequence Preconditioning since it guarantees the existence of a benchmark $\vec{x}^*$, as above, which has negligible prediction error and very small dimension, i.e. ($2n$ is $\mathcal{O}(\log(T))$), but unfortunately large norm.

\begin{theorem}[VAW with a polynomial comparator certificate]
\label{thm:poly_preconditioning}
Suppose $y_1,\dots,y_T$ comes from Eq.~\ref{eqn:lds}. Assume
$\A=\vec{P}\Lambda\vec{P}^{-1}$, $\max_t|y_t|\le Y$ for some $Y\ge1$, and
$\max_t|u_t|\le1$.
Let $K\subseteq\{z:|z|\le1\}$ contain $\lambda(\A)$, and let
\[
  p_n(z)=z^n+\sum_{i=1}^n c_i z^{n-i}
\]
be a monic preconditioning polynomial such that
\[
  \sup_{z\in K}|p_n(z)|\le \alpha_n,
  \qquad
  \max_{0\le i\le n} |c_i|\le G_n,
\]
where $c_0=1$ and $G_n\ge1$. Denote
\[
  \kappa=\norm{\vec{P}}_2\norm{\vec{P}^{-1}}_2,
  \qquad
  L=1+T(1+\kappa\norm{\C}_2\norm{\B}_2).
\]
Then Algorithm~\ref{alg:vaw_sp}, run with memory length $n$, satisfies
\[
\sum_{t=1}^T(\hat y_t-y_t)^2
\le
O\!\left(
  TL^2\alpha_n^2
  +
  Y^2 n\log\!\left(G_nYL\right)
\right).
\]
\end{theorem}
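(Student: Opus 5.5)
We combine three ingredients: the USP identity of Eq.~\ref{eqn:y_expression}, a bound on the residual $\epsilon_t$ in terms of $\alpha_n$, and the VAW guarantee of Theorem~\ref{thm:vaw}. The comparator is $\vec{x}^*=[-c_1,\dots,-c_n,\theta_0^p,\dots,\theta_{n-1}^p]$, as in Eq.~\ref{eqn:y_t_lin}. We invoke Theorem~\ref{thm:vaw} with this $\vec{x}^*$ and dimension $2n$, so that
\[
\sum_t(\hat y_t-y_t)^2\le \sum_t \epsilon_t^2 + 1 + Y^2\cdot 2n\log\!\Big(1+\tfrac{T\norm{\vec{x}^*}_2^2R^2}{2n}\Big).
\]
It remains to bound three quantities: $|\epsilon_t|$, $\norm{\vec{x}^*}_2$, and $R$.

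\textbf{Step 1 (residual).} By Eq.~\ref{eqn:lds_breakdown}, $\epsilon_t=\aleph_2=\sum_{s=0}^{t-n}\C p_n(\A)\A^s\B u_{t-n-s}$. Write $p_n(\A)\A^s=\vec{P}\,p_n(\Lambda)\Lambda^s\vec{P}^{-1}$. Since every eigenvalue lies in $K$, which sits inside the unit disk, we have $\norm{p_n(\Lambda)\Lambda^s}_2\le\alpha_n$. Each summand is therefore at most $\kappa\norm{\C}_2\norm{\B}_2\alpha_n$, and summing at most $T$ terms gives $|\epsilon_t|\le T\kappa\norm{\C}\norm{\B}\alpha_n\le L\alpha_n$. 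Hence $\sum_t\epsilon_t^2\le TL^2\alpha_n^2$.

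One subtlety is the early times $t\le n$, where the features reference $y_{t-i}$ and $u_{t-j}$ with nonpositive indices. We adopt the convention $y_s=u_s=0$ for $s\le0$, which holds for the LDS started at $\vec{h}_0=0$. We then check that Eq.~\ref{eqn:lds_breakdown} holds with an empty or truncated $\aleph_2$; this is routine, since the identity is algebraic in the zero-padded sequences.

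\textbf{Step 2 (comparator norm).} The first block satisfies $|c_i|\le G_n$. For the second block, $|\theta_s^p|\le\sum_{i\le s}|c_i|\,|\C\A^{s-i}\B|\le nG_n\kappa\norm{\C}\norm{\B}\le nG_nL$. Therefore $\norm{\vec{x}^*}_2^2\le 2n\cdot n^2G_n^2L^2$.

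\textbf{Step 3 (feature norm).} Since $|y_t|\le Y$ and $|u_t|\le1$, we get $R^2\le n(Y^2+1)\le 2nY^2$.

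Substituting these bounds, the logarithm is $\log(1+\mathrm{poly}(n,T)\,G_n^2L^2Y^2)$. Because $n\le T\le L$, this is $O(\log(G_nYL))$, which gives the claimed $O(TL^2\alpha_n^2+Y^2n\log(G_nYL))$.

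The step most likely to need care is the interface with Theorem~\ref{thm:vaw}. That theorem is stated for the loss $\tfrac12(\cdot)^2$ and with the regularization requirement $\lambda\ge1/\norm{\vec{x}^*}_2^2$. Algorithm~\ref{alg:vaw_sp} takes an arbitrary $\lambda>0$, so we must fix a choice. We would take $\lambda=1$, which is valid because $\norm{\vec{x}^*}_2\ge|c_0|$-type entries give $\norm{\vec{x}^*}\ge1$. Failing that, we would absorb the $\lambda\norm{\vec{x}^*}^2$ regularization term into the logarithmic term. The factor of $2$ from the $\tfrac12$ scaling and the dimension $2n$ in place of $n$ affect only constants.
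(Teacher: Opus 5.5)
Your Steps 1--3 and the final substitution match the paper's proof: the same comparator, $|\epsilon_t|\le L\alpha_n$, $\|\vec{x}^*\|_2=O(n^{3/2}G_nL)$, $R\le\sqrt{2n}\,Y$, then Theorem~\ref{thm:vaw}. Your added details are correct, namely bounding $\aleph_2$ through $\vec{P}\,p_n(\Lambda)\Lambda^s\vec{P}^{-1}$ and zero-padding at early times. What does not hold up is your choice of the regularization parameter, which the paper's proof leaves implicit.

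Taking $\lambda=1$ goes in the wrong direction. The additive $\tfrac12$ in Theorem~\ref{thm:vaw} is the regularization cost $\tfrac{\lambda}{2}\|\vec{x}^*\|_2^2$ evaluated at $\lambda=1/\|\vec{x}^*\|_2^2$. The underlying VAW guarantee is
\[
\sum_{t}\tfrac12(\hat y_t-y_t)^2\le\sum_{t}\tfrac12\big({\vec{x}^*}^{\top}\vec{a}_t-y_t\big)^2+\tfrac{\lambda}{2}\|\vec{x}^*\|_2^2+\tfrac{Y^2}{2}\,d\log\Big(1+\tfrac{TR^2}{d\lambda}\Big).
\]
Raising $\lambda$ above $1/\|\vec{x}^*\|_2^2$ only shrinks the logarithm, while the regularization term grows linearly in $\lambda$. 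For very large $\lambda$, VAW predicts roughly $0$ and can lose order $TY^2$, so the hypothesis ``$\lambda\ge1/\|\vec{x}^*\|_2^2$'' cannot be read literally.

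With $\lambda=1$ you pay an additive $\tfrac12\|\vec{x}^*\|_2^2\ge\tfrac12\sum_{i\ge1}c_i^2$. Already for the Chebyshev comparator the $c_i$ grow like $e^{\Theta(n)}$, so this term is polynomial in $T$ at $n=\Theta(\log T)$. Because it is additive, it cannot be absorbed into the logarithm. Separately, $c_0$ is not a coordinate of $\vec{x}^*$, so $\|\vec{x}^*\|_2\ge1$ is not guaranteed.

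The fix is to take $\lambda\le1/\|\vec{x}^*\|_2^2$, for example $\lambda=(2n^3G_n^2L^2)^{-1}$ from your Step 2. The regularization cost is then at most $\tfrac12$, and the logarithm becomes $\log\big(1+TR^2/(2n\lambda)\big)\le\log\big(1+2Tn^3Y^2G_n^2L^2\big)=O(\log(nG_nYL))$, which yields the claimed bound. Any smaller $\lambda$ with $\log(1/\lambda)=O(\log(nG_nYL))$ works equally well.
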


\begin{proof}
By Eq.~\ref{eqn:y_expression},
\[
  y_t
  =
  -\sum_{i=1}^n c_i y_{t-i}
  +\sum_{s=0}^{n-1}\theta_s^p u_{t-s}
  +\epsilon_t,
\]
with $|\epsilon_t|\le L\alpha_n$. Define the comparator
\[
  \vec{x}^*=[-c_1,\ldots,-c_n,\theta_0^p,\ldots,\theta_{n-1}^p]
\]
for the feature vector
$\vec{a}_t=[y_{t-1},\ldots,y_{t-n},u_t,\ldots,u_{t-n+1}]$. Then
${\vec{x}^*}^{\top}\vec{a}_t=y_t-\epsilon_t$, so the comparator's total
approximation loss is at most
\[
  \sum_{t=1}^T
  \left({\vec{x}^*}^{\top}\vec{a}_t-y_t\right)^2
  \le
  TL^2\alpha_n^2.
\]
For each $s<n$,
\[
  |\theta_s^p|
  =
  \left|\sum_{i=0}^s c_i\C\A^{s-i}\B\right|
  \le (s+1)G_nL\le nG_nL,
\]
so
\[
  \|\vec{x}^*\|_2
  \le O(n^{3/2}G_nL).
\]
The feature vectors have dimension $2n$ and norm at most
$R=\sqrt{n(Y^2+1)}\le\sqrt{2n}Y$. Substituting these bounds into
Theorem~\ref{thm:vaw}, and absorbing polynomial factors in $T,n,L$ into the
logarithm, gives
\[
\sum_{t=1}^T(\hat y_t-y_t)^2
\le
O\!\left(
  TL^2\alpha_n^2
  +
  Y^2 n\log(G_nYL)
\right),
\]
as claimed.
\end{proof}

First we show how the regret substantially improves upon the original first-order method from \cite{marsdenuniversal} in the original setting where the preconditioning polynomial is the Chebyshev polynomial. 

\begin{corollary}[Chebyshev instantiation for real spectra]
\label{cor:chebyshev}
Suppose $y_1,\dots,y_T$ comes from Eq.~\ref{eqn:lds}, where
$\A=\vec{P}\Lambda\vec{P}^{-1}$ and $\lambda(\A)\subseteq [-1,1]$. Assume $\max_t |y_t|\le Y$ for some $Y\ge1$ and $\max_t |u_t|\le 1$. Let
$n=\left\lceil \log_2(4TL^2)\right\rceil$.
Run Algorithm~\ref{alg:vaw_sp} with memory length $n$ on the raw sequence
features
$
\vec{a}*t =
[y*{t-1},\ldots,y_{t-n},u_t,\ldots,u_{t-n+1}].
$
Then
\[ 
\sum_{t=1}^T(\hat y_t-y_t)^2
\le
\mathcal{O} \left(
Y^2 n\left[n+\log(YL)\right]
\right) = 
\mathcal{O} \left(
Y^2 \log^2(TYL)
\right).
\]
In particular, for real marginally stable spectra, second-order sequence prediction achieves polylogarithmic cumulative prediction error, with no polynomial dependence on the hidden dimension except through
$\kappa, \norm{\C}_2$, and $\norm{\B}_2$.
\end{corollary}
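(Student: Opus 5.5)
The plan is to instantiate Theorem~\ref{thm:poly_preconditioning} with $K=[-1,1]$ and $p_n$ equal to the monic Chebyshev polynomial $p_n^{\text{cheby}}(z)=2^{1-n}T_n(z)$. Two parameters must be supplied, the residual $\alpha_n$ and the coefficient scale $G_n$; the conclusion then follows by substituting the chosen $n$.

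\textbf{Residual.} Since $|T_n(x)|\le 1$ on $[-1,1]$, we may take $\alpha_n=2^{-(n-1)}$. With $n=\lceil\log_2(4TL^2)\rceil$ we have $2^{n}\ge 4TL^2$. Hence
\[
TL^2\alpha_n^2 = 4TL^2\,2^{-2n} \le \frac{4TL^2}{(4TL^2)^2} = \frac{1}{4TL^2}\le 1,
\]
so the approximation term in Theorem~\ref{thm:poly_preconditioning} contributes only $O(1)$. This term is absorbed because $Y\ge1$ and $n\ge1$.

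\textbf{Coefficient scale.} We need $G_n\ge\max_i|c_i|$ together with $G_n\ge1$. By Lemma 3.2 of \cite{marsdenuniversal} the coefficients satisfy $\max_i|c_i|\le 2^{0.3n}$. A cruder bound also suffices: the coefficients of $T_n$ have absolute values summing to $|T_n(i)|\le(1+\sqrt2)^n$, so after the $2^{1-n}$ normalization we get $G_n\le 2\cdot((1+\sqrt2)/2)^n\le 2^{n}$. Either way $\log G_n=O(n)$.

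\textbf{Substitution.} Plugging into Theorem~\ref{thm:poly_preconditioning} gives
\[
\sum_t(\hat y_t-y_t)^2\le O\bigl(1+Y^2 n(\log G_n+\log(YL))\bigr)=O\bigl(Y^2 n[n+\log(YL)]\bigr).
\]
Since $n=O(\log(TL))$, this is $O(Y^2\log^2(TYL))$. Here $L$ depends on $\kappa\norm{\C}_2\norm{\B}_2$ and $T$ only polynomially inside the logarithm, which justifies the final remark about hidden-dimension dependence.

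\textbf{Main obstacle.} The one point needing care is that Eq.~\ref{eqn:y_expression} requires $\lambda(\A)\subseteq K$ with $\sup_K|p_n|\le\alpha_n$. For real spectra in $[-1,1]$ this holds directly, so no angular condition is needed. Everything else is routine bookkeeping of logarithmic factors.
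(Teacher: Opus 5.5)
Your proposal is correct and follows the paper's proof: instantiate Theorem~\ref{thm:poly_preconditioning} with $K=[-1,1]$ and the monic Chebyshev polynomial, take $\alpha_n=2^{1-n}$ (so $TL^2\alpha_n^2\le 1$ by the choice of $n$) and $G_n$ exponential in $n$, then substitute $n=O(\log(TL))$. The only difference is that you make the paper's unspecified constant $C_{\mathrm{cheb}}$ explicit via $\|T_n\|_1=|T_n(i)|\le(1+\sqrt2)^n$; your intermediate inequality $2((1+\sqrt2)/2)^n\le 2^n$ fails at $n=1$, but this is harmless because $L\ge2$ forces $n\ge4$ here.
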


\begin{proof}
We apply Theorem~\ref{thm:vaw} with the degree-$n$ monic
Chebyshev polynomial $M_n(z) = 
z^n+\sum_{i=1}^n c_i z^{n-i}$.
(The algorithm does not need to know these coefficients. They are used only to exhibit a comparator for the VAW regret bound.) The classical Chebyshev extremal property gives $\sup_{z\in[-1,1]} |M_n(z)|\le 2^{1-n}$. Moreover, the monomial coefficients of $M_n$ are exponentially bounded: for a
universal constant $C_{\mathrm{cheb}}>1$, $
\max_{0\le i\le n}|c_i|\le C_{\mathrm{cheb}}^n$. 
Thus Theorem~\ref{thm:vaw} applies with 
\[ 
\alpha_n=2^{1-n},
\qquad
G_n=C_{\mathrm{cheb}}^n .
\]
The approximation term satisfies $TL^2\alpha_n^2 = TL^2 2^{2-2n} \le 1$ 
by the choice of $n$. The learning term is
$ Y^2 n\log(G_nYL)= \mathcal{O} \left(
Y^2 n\left[n+\log(YL)\right]
\right)$. 
Since $n=\left\lceil \log_2(4TL^2)\right\rceil = 
\mathcal{O}(\log(TYL))$, 
we obtain
\[ 
\sum_{t=1}^T(\hat y_t-y_t)^2 = \mathcal{O} \left(
Y^2\log^2(TYL)
\right),
\]
as claimed.
\end{proof}

\begin{lemma}[Faber polynomial bounds for wedge spectra]
\label{lem:faber_wedge_bounds}
For every $\delta\in(0,\pi)$ and $n\ge1$, there is a real monic polynomial
\[
  p_{n,\delta}(z)=z^n+\sum_{i=1}^n c_i z^{n-i}
\]
with $c_0:=1$ such that
\[
  \sup_{z\in W_{\pi-\delta}}|p_{n,\delta}(z)|
  \le
  2e^{-n\delta^2/\pi^2},
  \qquad
  \max_{0\le i\le n}|c_i|\le 100^n .
\]
\end{lemma}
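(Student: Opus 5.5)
The plan is to take $p_{n,\delta}$ to be a rescaled Faber polynomial of the compact set $K:=W_{\pi-\delta}$. Let $\Phi:\widehat{\mathbb C}\setminus K\to\{|w|>1\}$ be the exterior conformal map normalized by $\Phi(\infty)=\infty$, $\Phi'(\infty)>0$, so that $\Phi(z)=z/\mathrm{cap}(K)+O(1)$. Let $F_n$ be the $n$-th Faber polynomial, i.e. the polynomial part of $\Phi(z)^n$. Set
\[
p_{n,\delta}(z):=\mathrm{cap}(K)^n F_n(z),
\]
which is monic. Since $K$ is symmetric under complex conjugation, $\Phi(\bar z)=\overline{\Phi(z)}$, so $F_n$ and hence $p_{n,\delta}$ have real coefficients.

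The argument then splits into three estimates, carried out in this order.

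\textbf{Uniform bound on $K$.} The boundary $\partial K$ is piecewise smooth: two segments and one circular arc. Its total rotation $V$ is an absolute constant, at most the turning of the tangent, which is $2\pi$ plus the jumps at the three corners, so $V\le 5\pi$. I will invoke Pommerenke's theorem for sets of bounded rotation, $\sup_K|F_n|\le V/\pi$. To land on the stated constant $2$, I would try to sharpen this: the corner at $0$ is reentrant, while the corners at $e^{\pm i(\pi-\delta)}$ are convex. Failing that, I would use the representation $F_n(z)=\frac{1}{2\pi i}\int_{|w|=1}w^n\,\frac{\Psi'(w)}{\Psi(w)-z}\,dw$ and bound it directly. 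In any case the bound has the form $\sup_K|F_n|\le C_0$.

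\textbf{Capacity estimate.} I would show $\mathrm{cap}(K)\le e^{-\delta^2/\pi^2}$. Then
\[
\sup_K|p_{n,\delta}|\le C_0\,e^{-n\delta^2/\pi^2}.
\]
To get the capacity bound, I would build $\Phi$ explicitly as a composition of elementary maps. First, $z\mapsto \frac{1+z}{1-z}$ style Möbius maps send the circular arc and the two rays to lines or circles. Then a power map $\zeta\mapsto\zeta^{\beta}$ with $\beta=\pi/(2\pi-\text{angle})$ opens the reentrant corner. Finally, a Joukowski map finishes the job. Reading off $\Phi'(\infty)$ then gives $\log(1/\mathrm{cap}(K))$ as an explicit function of $\delta$ that behaves like $c\,\delta^2$ for small $\delta$, and I would check that $c\ge 1/\pi^2$ for all $\delta$.

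As a fallback, I would compare Green's functions. The quantity $\log(1/\mathrm{cap}K)=\lim_{z\to\infty}\bigl(g_K(z)-\log|z|\bigr)$ is bounded below by the harmonic measure of the gap $\{|z|=1,\ |\arg z+\pi|<\delta\}$ seen from infinity, weighted by how far the Green function of the disk is from zero there. This weighting produces the $\delta\cdot\delta$ scaling.

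\textbf{Coefficient bound.} By Cauchy's estimate on $|z|=1$,
\[
|c_i|\le \max_{|z|=1}|p_{n,\delta}(z)|.
\]
For $|z|=1$ outside $K$, write
\[
F_n(z)=\Phi(z)^n-\frac{1}{2\pi i}\int_{|w|=1}\frac{w^n\,\Psi'(w)}{\Psi(w)-z}\,dw,
\]
taking the contour slightly outside, at $|w|=1+\epsilon$. This gives $|F_n(z)|\le \sup_{|z|=1}|\Phi(z)|^n+C_1(1+\epsilon)^n$. Using $\mathrm{cap}(K)\le1$ and the explicit form of $\Phi$, one has $|\Phi(z)|\le M$ on the unit circle for an absolute constant $M$. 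This is crude but sufficient, and a crude $M$ and $C_1$ easily fit under $100^n$. I would double-check this near the corner at $-1$ on the circle.

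\textbf{Main obstacle.} The hardest step is the quantitative capacity estimate with the specific exponent $\delta^2/\pi^2$, uniformly in $\delta\in(0,\pi)$. Extracting the quadratic behaviour in $\delta$ from the conformal map requires tracking the reentrant corner at the origin together with the thin gap near $-1$. The constant $2$ in front, which needs a sharp Faber bound on a nonconvex set, is the secondary difficulty. If needed, I would absorb a larger constant by slightly enlarging $K$ or adjusting $n$.
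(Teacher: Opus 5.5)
Your outline matches the paper's Part I: take the monic rescaling $\mathrm{cap}(K)^nF_n$ of the Faber polynomial of $K=W_{\pi-\delta}$, bound it by a constant times $\mathrm{cap}(K)^n$, and get real coefficients from conjugation symmetry. But the two quantitative facts that actually carry the lemma are left unproved.

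The first is the capacity bound $\mathrm{cap}(W_{\pi-\delta})\le e^{-\delta^2/\pi^2}$, which you rightly call the crux. The paper does not construct a conformal map. It uses the Coleman--Smith closed form $\mathrm{cap}(W_\theta)=t^{-t}(2-t)^{t-2}$ with $t=\theta/\pi$. Setting $\eta=\delta/\pi$, this gives $\log(1/\mathrm{cap})=(1+\eta)\log(1+\eta)+(1-\eta)\log(1-\eta)=\sum_{k\ge1}\eta^{2k}/(k(2k-1))\ge\eta^2$.

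Your sketched construction does not work as described. To open the corner with $\zeta^\beta$ while keeping the rays straight, the M\"obius map must send the two common points $0,\infty$ of the ray-lines to the branch points of $\zeta^\beta$. But $\infty$ lies in the exterior domain, so $\zeta^\beta$ is not single-valued there. The map that straightens the picture is $\log z$, which sends the arc and both rays to lines. The upper half of $\widehat{\mathbb C}\setminus W_\theta$ then becomes a step-shaped polygon. A Schwarz--Christoffel map from the upper half-plane, precomposed with $s=\tfrac12(w+w^{-1})$ and followed by $\exp$, yields exactly the formula above. Your harmonic-measure fallback gives the $\delta^2$ scaling heuristically, but not the constant $1/\pi^2$ uniformly in $\delta\in(0,\pi)$.

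The second missing fact is the constant $2$. For $\delta<\pi/2$ the sector is not convex, so K\"ovari--Pommerenke does not apply. Pommerenke's bounded-rotation theorem gives $V/\pi$ with $V=4(\pi-\delta)$: the arc turns $2(\pi-\delta)$, the two right-angle corners contribute $\pi$, and the reentrant turn is $\pi-2\delta$. So you only get $\sup_K|p_{n,\delta}|\le 4(1-\delta/\pi)e^{-n\delta^2/\pi^2}$.

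Your proposed remedies do not help. Enlarging $K$ raises the capacity, and $n$ is fixed by the statement. There is also no slack to absorb a factor $2$ uniformly in $n$, since $\log(1/\mathrm{cap})-\eta^2=\eta^4/6+O(\eta^6)$. The paper gets $2$ by citing the sector-specific estimate $\sup_{W_\theta}|\mathrm{cap}(W_\theta)^{-n}F_{n,\theta}|\le 2$ of Gatermann--Hoffmann--Opfer.

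Your coefficient step is a legitimate alternative route, but as written the contour bound at $|w|=1+\epsilon$ is not uniform for $z$ on the unit circle near $e^{\pm i(\pi-\delta)}$, where $|\Phi(z)|\to1$. The point $-1$ is not a corner; it lies outside $K$. Bernstein--Walsh on $[0,1]\subset K$ repairs this: $|p_{n,\delta}(z)|\le\|p_{n,\delta}\|_{[0,1]}\,|x+\sqrt{x^2-1}|^n$ with $x=2z-1$ (branch of modulus $\ge1$). The right side is at most $C_0(3+2\sqrt2)^n$ on $|z|=1$, so Cauchy's estimate gives $|c_i|\le C_0(3+2\sqrt2)^n$. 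The paper instead expands $p_{n,\delta}((x+1)/2)$ in the Chebyshev basis. It uses only $\sup_{[0,1]}|p_{n,\delta}|\le2$, then $|b_k|\le4$, $\|T_k\|_1\le3^k$, and the substitution $z\mapsto 2z-1$ to reach $72^n$.
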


\begin{corollary}[Main result: VAW with Faber preconditioning]
\label{cor:main}
Suppose $y_1,\dots,y_T$ comes from Eq.~\ref{eqn:lds}, where
$\A=\vec{P}\Lambda\vec{P}^{-1}$ and
$\lambda(\A)\subseteq W_{\pi-\delta}$ for some $\delta\in(0,\pi)$.
Assume $\max_t|y_t|\le Y$ for some $Y\ge1$ and $\max_t|u_t|\le1$. Set
\[
  n=
  \left\lceil
    \frac{\pi^2}{\delta^2}\log(4T^2L)
  \right\rceil .
\]
Then Algorithm~\ref{alg:vaw_sp}, run with memory length $n$, satisfies
\[
\sum_{t=1}^T(\hat y_t-y_t)^2
=
O\!\left(
  Y^2 n
  \left[
    n+
    \log(YL)
  \right]
\right)
=
O\!\left(
  \frac{Y^2}{\delta^4}
  \log^2(TYL)
\right).
\]
In particular, for fixed $\delta,Y,\kappa,\norm{\C}_2$, and $\norm{\B}_2$, the
bound is polylogarithmic in $T$. The hidden dimension enters only through
$\kappa,\norm{\C}_2$, and $\norm{\B}_2$.
\end{corollary}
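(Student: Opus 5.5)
The plan is to instantiate Theorem~\ref{thm:poly_preconditioning} with $K=W_{\pi-\delta}$ and the sector Faber polynomial $p_{n,\delta}$ of Lemma~\ref{lem:faber_wedge_bounds}, in the same way Corollary~\ref{cor:chebyshev} instantiates it with the Chebyshev polynomial. Since $\lambda(\A)\subseteq W_{\pi-\delta}\subseteq\{|z|\le1\}$, the set $K$ is admissible. Lemma~\ref{lem:faber_wedge_bounds} provides a real monic $p_{n,\delta}$ with
\[
\alpha_n = 2e^{-n\delta^2/\pi^2},\qquad G_n=100^n .
\]
Realness matters here because it makes the comparator coefficients $c_i$, and hence $\theta^p_s=\sum_{i\le s}c_i\C\A^{s-i}\B$, real. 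This lets $\vec{x}^*$ serve as a valid real comparator for VAW. As in the Chebyshev case, the algorithm never uses these coefficients; they only certify that a comparator exists.

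The first step is the approximation term. With $n=\lceil (\pi^2/\delta^2)\log(4T^2L)\rceil$ we have $e^{-n\delta^2/\pi^2}\le 1/(4T^2L)$, so $\alpha_n\le 1/(2T^2L)$. Then
\[
TL^2\alpha_n^2 \le \frac{TL^2}{4T^4L^2}\le 1 .
\]
The $L^2$ factor would not fully cancel if $L$ were large. I expect the stated choice of $n$ still makes this term $O(1)$ after checking constants, and if it does not, I would replace $\log(4T^2L)$ by $\log(4TL^2)$, which changes only constants inside the final logarithm.

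The second step is the learning term. It equals
\[
Y^2 n\log(G_nYL)=Y^2 n\,[\,n\log 100+\log(YL)\,]=O\big(Y^2 n[n+\log(YL)]\big).
\]
Substituting $n=O(\delta^{-2}\log(TL))$ gives $n^2=O(\delta^{-4}\log^2(TL))$ and $n\log(YL)=O(\delta^{-2}\log(TL)\log(YL))$. Since $\delta<\pi$, the factor $\delta^{-2}$ is at most a constant times $\delta^{-4}$, so both pieces are $O(\delta^{-4}\log^2(TYL))$. Adding the $O(1)$ approximation term and using $Y\ge1$ to absorb it yields the bound in Corollary~\ref{cor:main}.

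Finally, the claim about hidden dimension follows because every quantity in the bound is $T$, $Y$, $\delta$, or $L=1+T(1+\kappa\|\C\|_2\|\B\|_2)$, so $d_{\text{hidden}}$ enters only through $\kappa$, $\|\C\|_2$ and $\|\B\|_2$. Given Lemma~\ref{lem:faber_wedge_bounds}, none of these steps is a real obstacle; the only delicate point is the constant bookkeeping in the approximation term. The substantive difficulty sits in Lemma~\ref{lem:faber_wedge_bounds} itself, namely bounding the Faber polynomial of the sector uniformly on $W_{\pi-\delta}$ through the exterior conformal map and controlling its coefficients. Here that lemma is taken as given.
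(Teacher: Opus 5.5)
Your proposal is correct and follows essentially the same route as the paper: instantiate Theorem~\ref{thm:poly_preconditioning} with $K=W_{\pi-\delta}$, $\alpha_n=2e^{-n\delta^2/\pi^2}$ and $G_n=100^n$ from Lemma~\ref{lem:faber_wedge_bounds}, show that the approximation term is $O(1)$, and substitute $n=O(\delta^{-2}\log(TYL))$ into the learning term $Y^2 n[n+\log(YL)]$. Your hedge about the $L^2$ factor is unnecessary: your own computation gives $\alpha_n^2\le 1/(4T^4L^2)$, so the $L^2$ cancels exactly and $TL^2\alpha_n^2\le 1/(4T^3)$ (the paper states $\le 1/T$), which means the stated choice of $n$ already works and no change to $\log(4T^2L)$ is needed.
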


\begin{proof}
Apply Lemma~\ref{lem:faber_wedge_bounds} and Theorem~\ref{thm:poly_preconditioning}
with $K=W_{\pi-\delta}$,
\[
  \alpha_n=2e^{-n\delta^2/\pi^2},
  \qquad
  G_n=100^n .
\]
The approximation term is at most a constant for the stated choice of $n$, since
$TL^2\alpha_n^2\le 1/T$. Also $\log(G_n)=O(n)$. Theorem~\ref{thm:poly_preconditioning}
therefore gives
\[
\sum_{t=1}^T(\hat y_t-y_t)^2
  \le
O\!\left(
  Y^2 n
  \left[
    n+
    \log(YL)
  \right]
\right).
\]
Since $Y\ge1$ and $L\ge1$, $\log(4T^2L)=O(\log(TYL))$ and
$\log(YL)\le \log(TYL)$. Also, because $\delta\in(0,\pi)$, the ceiling in the
definition of $n$ gives $n=O(\delta^{-2}\log(TYL))$. Substituting this value of
$n$ gives the expanded bound.
\end{proof}

\section{Experiments}
\label{sec:experiments}

\paragraph{Data Generation.}
We evaluate Universal Sequence Preconditioning (USP) on synthetic data generated from a Linear Dynamical System (LDS) governed by the state-space equations from Eq.~\ref{eqn:lds}. We introduce additive observation noise $v_t \sim \mathcal{N}(0, \sigma^2)$ with $\sigma = 0.01$. To test the constant-angle regime captured by Corollary~\ref{cor:main}, we specify the eigenvalues of the transition matrix $\vec{A}$ to have a magnitude of $1 - \delta$ with $\delta = 10^{-7}$ (determining the spectral gap) and imaginary components bounded by a threshold $\tau = 0.1$. The input and output matrices, $\vec{B}$ and $\vec{C}$, are sampled from a standard normal distribution and normalized such that $\|\vec{B}\|_2 = \|\vec{C}\|_2 = 1$. We evaluate two primary configurations:
\begin{enumerate}
\item \textbf{High-dimensional}: $d_{\text{hidden}} = 100$ and baseline feature window size $w = 100$.
\item \textbf{Low-dimensional}: $d_{\text{hidden}} = 10$ and baseline feature window size $w = 10$.
The parameter $w$ dictates the history length for the un-preconditioned baseline models.
\end{enumerate}
Both configurations use a sequence length of $T = 5000$. Inputs $u_t$ are generated by sampling from $\mathcal{N}(0,1)$ and scaling by $1/\log(5(t+2))$ to simulate a decaying excitation signal. To evaluate robustness under poor conditioning, we also test highly correlated input sequences generated from an AR(1) process with coefficient $\rho = 0.99$.

\paragraph{Performance Measurement.}
We compare Online Gradient Descent (OGD), Adam, and the Vovk-Azoury-Warmuth (VAW) forecaster. For each method, we perform a grid search over hyperparameters and report results using the optimal configuration. The grids are defined as follows:
\begin{itemize}
    \item OGD and Adam learning rates: $\eta \in \{10^{-4}, 10^{-3}, 0.01, 0.05, 0.1\}$.
    \item VAW regularization parameter: $\lambda \in \{10^{-2}, 10^{-1}, 1, 10, 100, 1000\}$.
\end{itemize}
The preconditioning polynomial degree $n$ is swept from $0$ to $30$. We evaluate two approaches for the polynomial:
\begin{enumerate}
\item \textbf{Chebyshev}: Fixed coefficients $c_1, \dots, c_n$ derived from monic Chebyshev polynomials. The VAW targets the preconditioned signal directly by using $\tilde{y}_t = y_t + \sum_{i=1}^n c_i y_{t-i}$ as labels and $u$ as features.
    \item \textbf{Learnable}: Polynomial coefficients treated as learnable parameters optimized alongside the model. This is exactly Algorithm~\ref{alg:vaw_sp}.
\end{enumerate}

Performance is quantified by the normalized prediction error,
$\frac{|y_t - \hat{y}_t|}{|y_t| + \epsilon}$ (with $\epsilon = 10^{-8}$). To capture steady-state performance, we compute the mean error over the final $200$ steps of the sequence for each trial, and report the median of this metric across $10$ independent trials. Error bars indicate the interquartile range (25th--75th percentiles).

We evaluate the performance of OGD, Adam, and VAW across different degrees of preconditioning. Figure~\ref{fig:experimental_results} presents the results for both high-dimensional ($d=100$) and low-dimensional ($d=10$) regimes. As anticipated by our theoretical analysis, the Chebyshev-preconditioned VAW achieves superior steady-state error reduction, particularly in the high-dimensional setting where learning the annihilating polynomial is prohibitively complex.

\begin{figure}[htbp]
    \centering
    \begin{subfigure}[b]{0.48\textwidth}
        \centering
        \includegraphics[width=\textwidth]{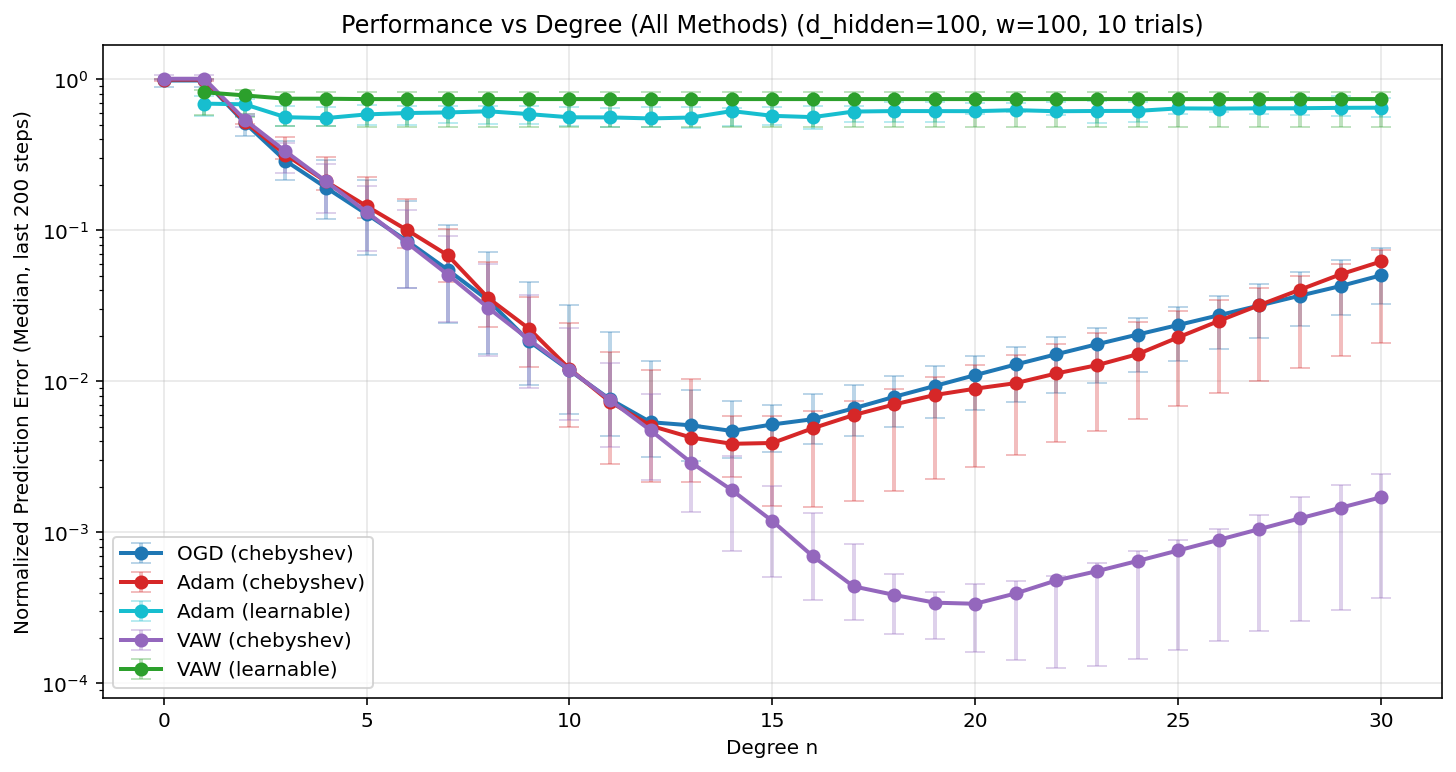}
        \caption{Hidden dimension $d = 100$.}
        \label{fig:error_d100}
    \end{subfigure}
    \hfill
    \begin{subfigure}[b]{0.48\textwidth}
        \centering
        \includegraphics[width=\textwidth]{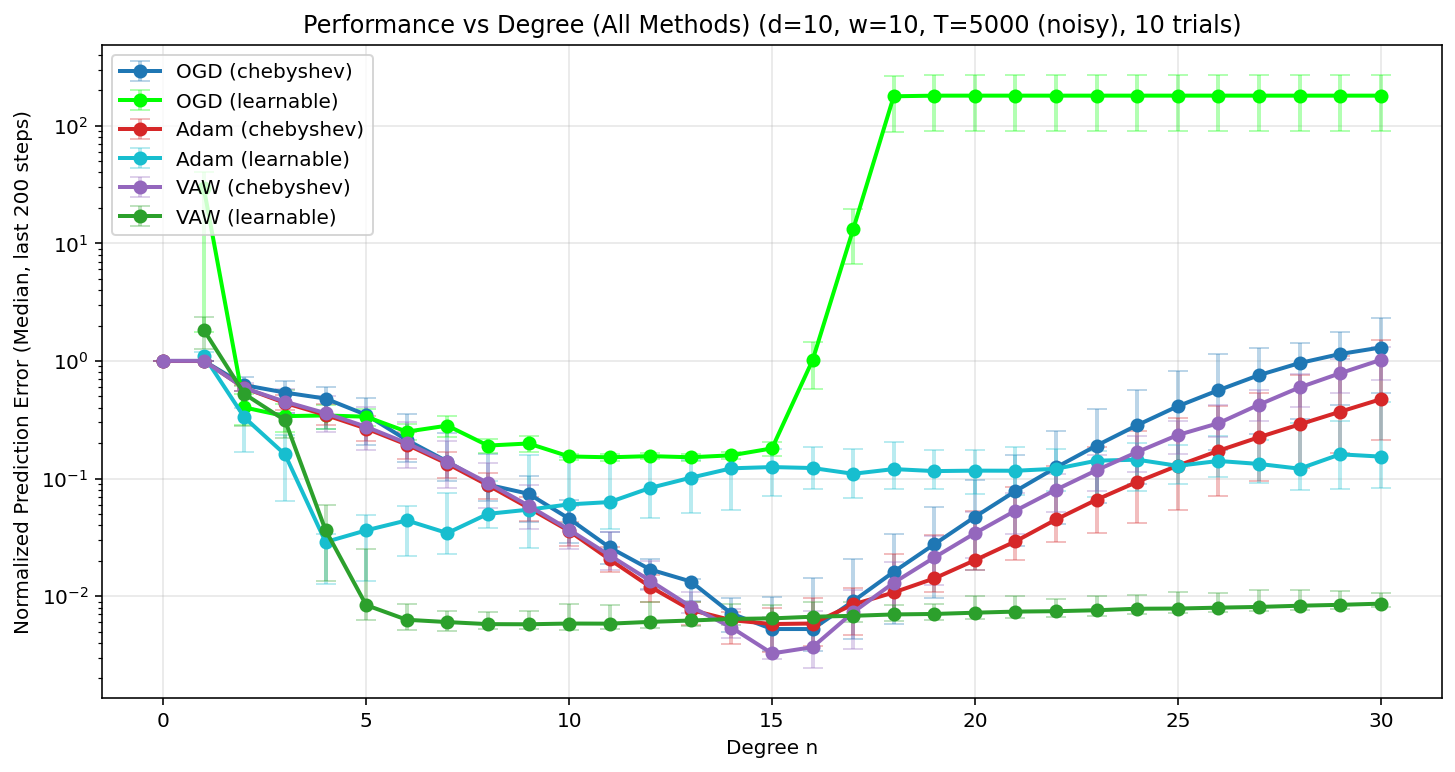}
        \caption{Hidden dimension $d = 10$.}
        \label{fig:error_d10}
    \end{subfigure}
    \caption{Normalized prediction error versus degree of preconditioning polynomial for (a) $d=100$ and (b) $d=10$. The Chebyshev-preconditioned VAW achieves superior steady-state error reduction compared to OGD and Adam as the degree increases.}
    \label{fig:experimental_results}
\end{figure}

\paragraph{Results Discussion.}
We observe the impact of hidden dimension on the importance of preconditioning with Chebyshev coefficients over learning them. By the Cayley-Hamilton theorem, a perfect autoregressive predictor of degree $d_{\text{hidden}}$ theoretically exists for this system. As we see in Figure~\ref{fig:experimental_results}, when the hidden dimension is high enough, adaptively learning $2n$ parameters (which are functionally approximating the $d_{\text{hidden}}$-dimensional system) fails to converge effectively for OGD and Adam. However, fixed Chebyshev preconditioning of degree $n = \mathcal{O}(\log T)$ elegantly bypasses this parameter learning phase.

Next, these results are consistent with the main hypothesis of our paper: VAW is much more stable than first-order methods
(i.e. OGD and Adam) to the large coefficients needed to optimally precondition the signal. Indeed, after degree 14 the
first-order methods stop gaining benefits from preconditioning and their performance worsens. However, the VAW method continues to leverage higher degrees, getting optimal prediction error at degree 20 before its performance worsens.

\subsection{Signal Norm and Preconditioning Dynamics}
\label{subsec:signal_norm}
Our theoretical results are impeded by the inability to prove that the preconditioned target signal will have small norm. If we could bound the norm of the preconditioned signal that didn't grow with the size of the preconditioning coefficients then two important changes could be made to our results: 1) we would not need to learn the autoregressive coefficients and 2) we could just as easily apply Online Newton Step (ONS) to get the state of the art regret bounds. Empirically, however, we see that the preconditioned target signal actually has much smaller norm than the raw signal in many regimes. Figures~\ref{fig:raw_vs_cheby_d12} and \ref{fig:cheby_vs_diff_d12} illustrate the raw signal, generated synthetically as described above, alongside the Chebyshev preconditioned signal (with degree 12) as well as the differenced signal. Figure~\ref{fig:norm_sweep} sweeps over 100 random trials and plots the maximum norm of the signal along the horizon. We plot the median of the trials and give a shaded region to represent the interquartile range (25th--75th percentiles). 

\begin{figure}[htbp]
    \centering
    \begin{subfigure}[b]{0.32\textwidth}
        \centering
        \includegraphics[width=\textwidth]{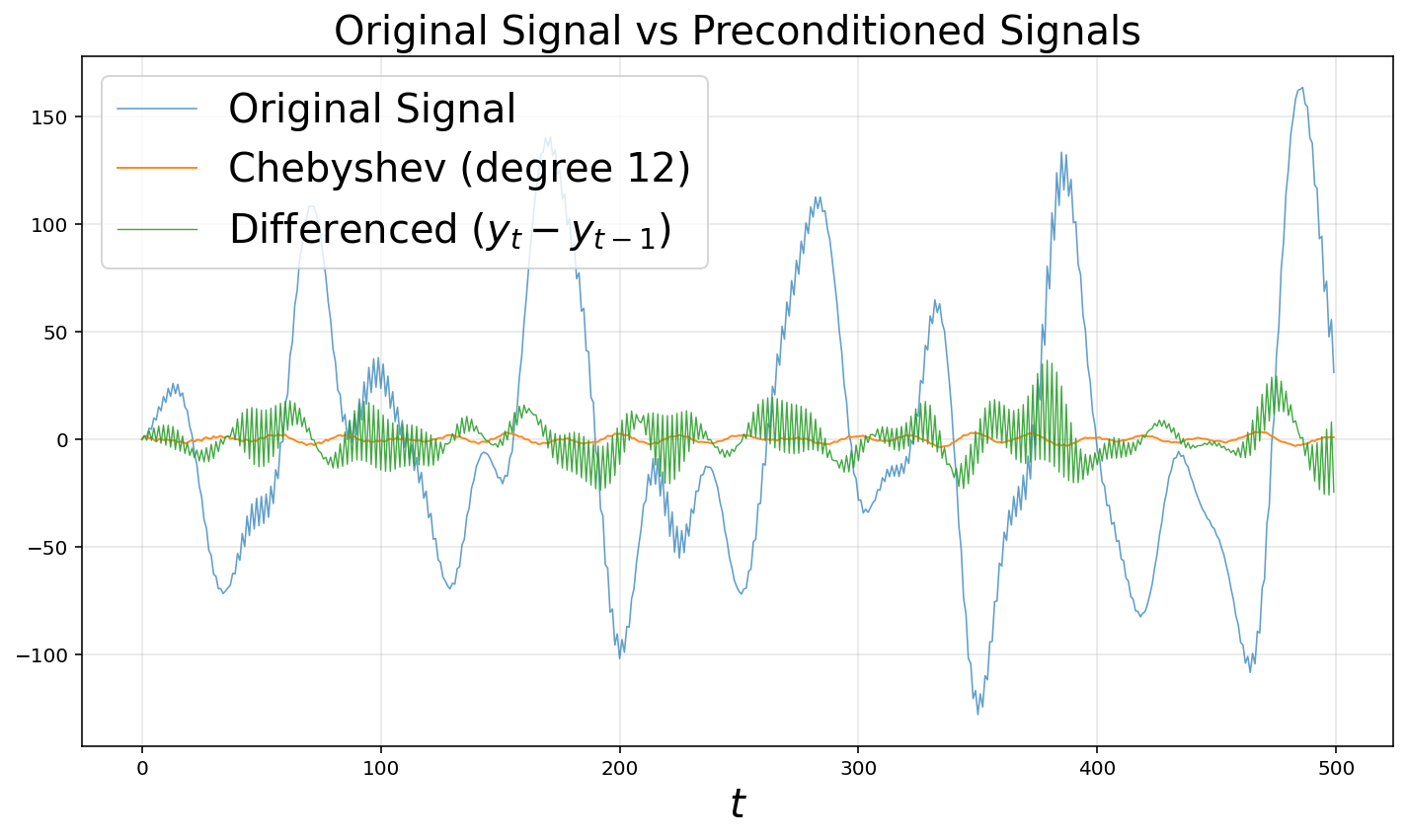}
        \caption{Raw versus preconditioned signals.}
        \label{fig:raw_vs_cheby_d12}
    \end{subfigure}
    \hfill
    \begin{subfigure}[b]{0.32\textwidth}
        \centering
        \includegraphics[width=\textwidth]{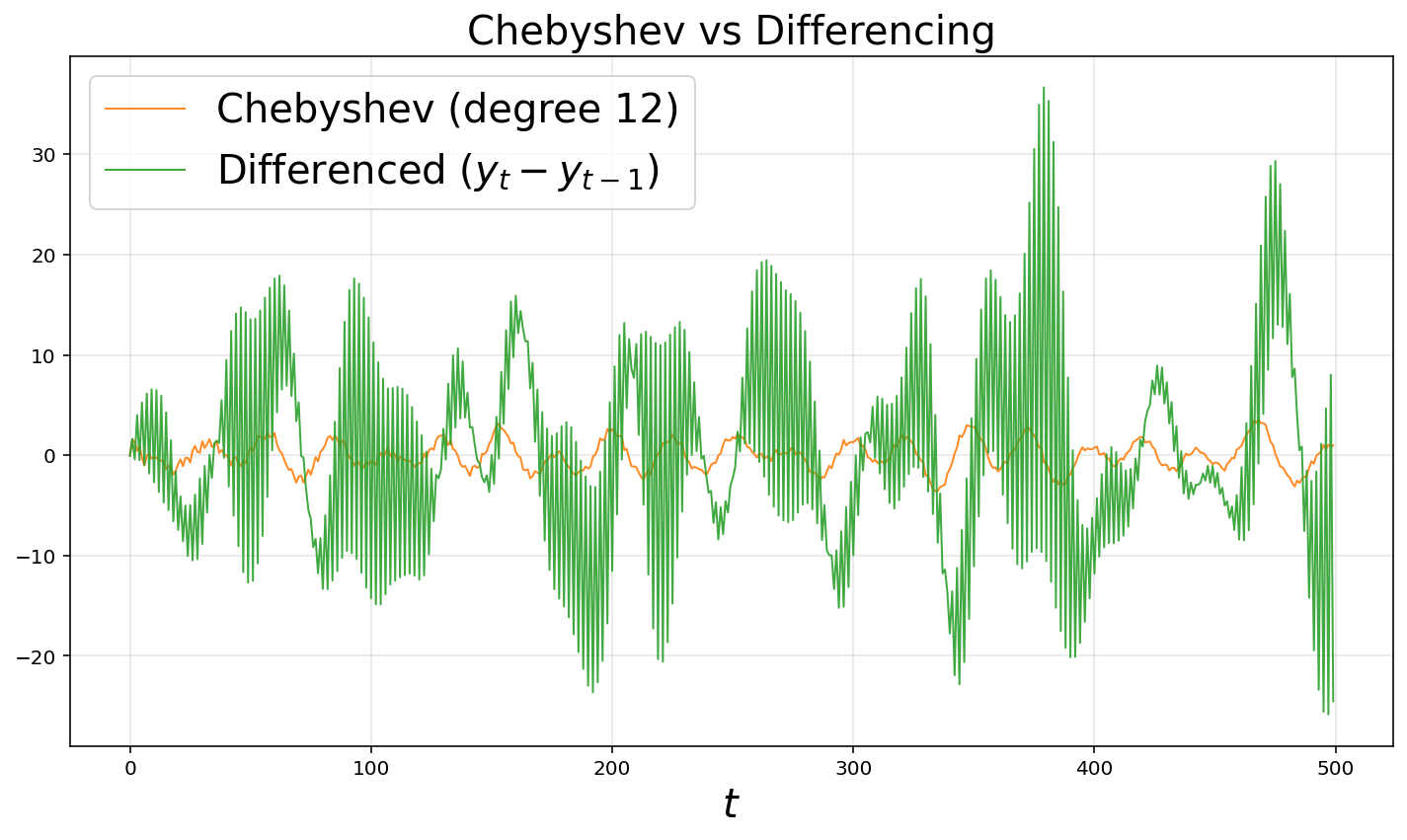}
        \caption{Chebyshev preconditioning versus differencing.}
        \label{fig:cheby_vs_diff_d12}
    \end{subfigure}
    \hfill
    \begin{subfigure}[b]{0.32\textwidth}
        \centering
        \includegraphics[width=\textwidth]{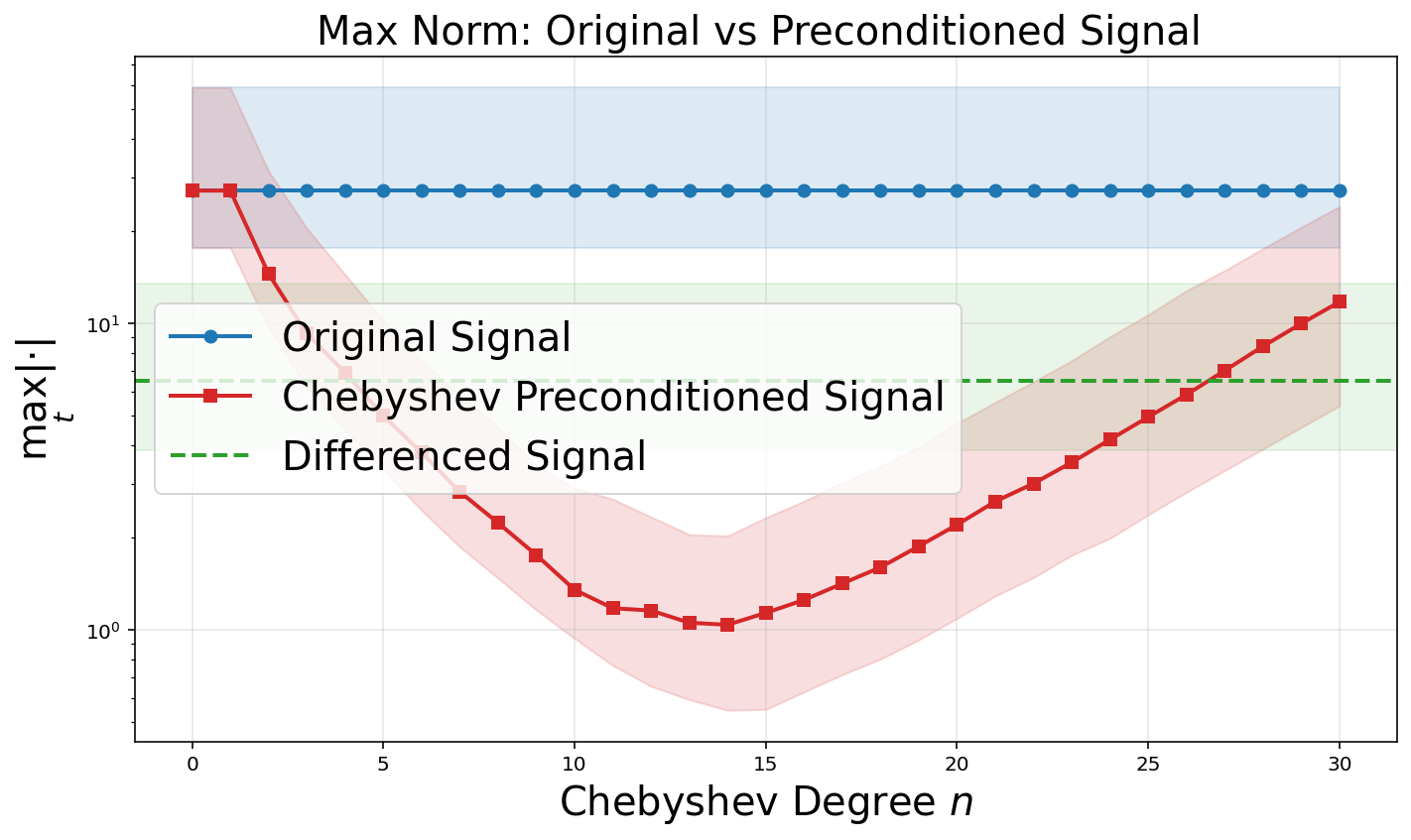}
        \caption{Maximum norm versus Chebyshev degree.}
        \label{fig:norm_sweep}
    \end{subfigure}
    \caption{Preconditioning empirically reduces the norm of the signal. If this is provable, learning the autoregressive coefficients for Algorithm~\ref{alg:vaw_sp} is unnecessary.}
    \label{fig:signal_plots}
\end{figure}

\section{Conclusion}
\paragraph{Summary} We addressed a fundamental tension in sequence prediction for marginally stable linear dynamical systems: the trade-off between effective memory compression and the exponential explosion of parameter magnitudes. By interpreting Universal Sequence Preconditioning (USP) through the lens of the Minimum Description Length (MDL) principle, we established that second-order online learning methods---specifically the Vovk-Azoury-Warmuth (VAW) forecaster---are uniquely equipped to absorb the parameter blowup induced by polynomial preconditioning. Our theoretical framework yields near-optimal guarantees. By shifting the learning burden from the parameter norm to the effective dimension, we exponentially improved the cumulative prediction regret from the $\tilde{\mathcal{O}}(T^{11/13})$ bounds of prior first-order approaches to $\mathcal{O}(\text{polylog}(T))$, completely eliminating polynomial dependencies on both the time horizon and the hidden state dimension. Furthermore, we introduce Faber preconditioning to extend the theoretical viability of USP to a strictly broader class of dynamical systems with constant complex arguments, namely spectra in $W_{\pi-\delta}$ for constant $\delta>0$. Empirically, our evaluations robustly corroborate these theoretical insights. The Chebyshev-preconditioned VAW algorithm consistently outperforms both first-order methods and adaptive-polynomial approaches, particularly in adverse regimes characterized by large time horizons, high-dimensional hidden states, and highly correlated input sequences. 

\paragraph{Limitations} 
Our theoretically analyzed VAW algorithm requires learning the autoregressive coefficients. This is a slight departure from the spirit of USP since an offline transformation to the target sequence makes it easier to learn whereas in our case we must learn this transformation. However, it is also deeply connected to USP since its proof heavily relies on construction of the benchmark $\vec{x}^*$, which is the polynomial preconditioned solution.
Moreover, empirical evidence suggests this requirement may be an artifact of our proof technique. Another limitation of this work is the practicality of the VAW algorithm. It requires a much larger memory footprint than the first order methods. The memory footprint grows poly-logarithmically with horizon length and as $\tilde O(\delta^{-2})$ in the Faber wedge parameter.

\paragraph{Future Directions} Promising directions for future work include extending these second-order preconditioning frameworks to non-linear dynamical systems, perhaps by using the recent breakthrough of \cite{dogariu2025universal}. Another important direction is to sharpen the dependence on the angular gap $\delta$, since the current Faber proof gives a $\delta^{-4}$ loss dependence while the cyclic-permutation lower bound only forces memory on the order of $1/\delta$. Ultimately, our results demonstrate that when appropriately paired with second-order optimization, universal preconditioning unlocks fundamentally new capabilities for scalable, highly accurate sequence prediction.

\newpage

\bibliographystyle{plain}
\bibliography{main}

@inproceedings{hazan2017spectral,
  title     = {Learning Linear Dynamical Systems via Spectral Filtering},
  author    = {Elad Hazan and Karan Singh and Cyril Zhang},
  booktitle = {Advances in Neural Information Processing Systems 30},
  pages     = {6702--6712},
  year      = {2017}
}

@inproceedings{hazan2018generallds,
  title     = {Spectral Filtering for General Linear Dynamical Systems},
  author    = {Elad Hazan and Holden Lee and Karan Singh and Cyril Zhang and Yi Zhang},
  booktitle = {Advances in Neural Information Processing Systems 31},
  pages     = {4639--4648},
  year      = {2018}
}

@article{hardt2018gradient,
  title={Gradient descent learns linear dynamical systems},
  author={Hardt, Moritz and Ma, Tengyu and Recht, Benjamin},
  journal={Journal of Machine Learning Research},
  volume={19},
  number={29},
  pages={1--44},
  year={2018}
}

@inproceedings{ghai2020marginallystable,
  title     = {No-Regret Prediction in Marginally Stable Systems},
  author    = {Udaya Ghai and Holden Lee and Karan Singh and Cyril Zhang and Yi Zhang},
  booktitle = {Proceedings of the 33rd Conference on Learning Theory},
  series    = {Proceedings of Machine Learning Research},
  volume    = {125},
  pages     = {1714--1757},
  year      = {2020}
}

@inproceedings{rashidinejad2020slip,
  title     = {SLIP: Learning to Predict in Unknown Dynamical Systems with Long-Term Memory},
  author    = {Paria Rashidinejad and Jiantao Jiao and Stuart J. Russell},
  booktitle = {Advances in Neural Information Processing Systems 33},
  year      = {2020}
}

@article{tsiamis2023onlinekalman,
  title   = {Online Learning of the Kalman Filter With Logarithmic Regret},
  author  = {Anastasios Tsiamis and George J. Pappas},
  journal = {IEEE Transactions on Automatic Control},
  volume  = {68},
  number  = {5},
  pages   = {2774--2789},
  year    = {2023}
}

@article{marsden2025dimensionfree,
  title   = {Dimension-free Regret for Learning Asymmetric Linear Dynamical Systems},
  author  = {Annie Marsden and Elad Hazan},
  journal = {arXiv preprint arXiv:2502.06545},
  year    = {2025}
}

@inproceedings{simchowitz2019semiparametric,
  title     = {Learning Linear Dynamical Systems with Semi-Parametric Least Squares},
  author    = {Max Simchowitz and Ross Boczar and Benjamin Recht},
  booktitle = {Proceedings of the 32nd Conference on Learning Theory},
  series    = {Proceedings of Machine Learning Research},
  volume    = {99},
  pages     = {2714--2802},
  year      = {2019}
}

@inproceedings{tsiamis2019stochsysid,
  title     = {Finite Sample Analysis of Stochastic System Identification},
  author    = {Anastasios Tsiamis and George J. Pappas},
  booktitle = {2019 IEEE 58th Conference on Decision and Control (CDC)},
  pages     = {3648--3654},
  year      = {2019},
  organization = {IEEE}
}

@inproceedings{lee2022improved,
  title     = {Improved Rates for Prediction and Identification of Partially Observed Linear Dynamical Systems},
  author    = {Holden Lee},
  booktitle = {Proceedings of the 33rd International Conference on Algorithmic Learning Theory},
  series    = {Proceedings of Machine Learning Research},
  volume    = {167},
  pages     = {668--698},
  year      = {2022}
}

@inproceedings{bakshi2023newapproach,
  title     = {A New Approach to Learning Linear Dynamical Systems},
  author    = {Ainesh Bakshi and Allen Liu and Ankur Moitra and Morris Yau},
  booktitle = {Proceedings of the 55th Annual ACM Symposium on Theory of Computing},
  pages     = {335--348},
  year      = {2023}
}

@article{gatermann1992faber,
  title={Explicit Faber polynomials on circular sectors},
  author={Gatermann, Karin and Hoffmann, Christoph and Opfer, Gerhard},
  journal={Mathematics of Computation},
  volume={58},
  number={197},
  pages={241--253},
  year={1992},
  doi={10.1090/S0025-5718-1992-1106967-4}
}

@inproceedings{fattahi2021logsamples,
  title     = {Learning Partially Observed Linear Dynamical Systems from Logarithmic Number of Samples},
  author    = {Salar Fattahi},
  booktitle = {Proceedings of the 3rd Conference on Learning for Dynamics and Control},
  series    = {Proceedings of Machine Learning Research},
  volume    = {144},
  pages     = {60--72},
  year      = {2021}
}

@article{coleman1987faber,
  author  = {John P. Coleman and Russell A. Smith},
  title   = {The Faber Polynomials for Circular Sectors},
  journal = {Mathematics of Computation},
  volume  = {49},
  number  = {180},
  pages   = {231--241},
  year    = {1987}
}

@inproceedings{marsdenuniversal,
  title={Universal Sequence Preconditioning},
  author={Marsden, Annie and Hazan, Elad},
  booktitle={The Thirty-ninth Annual Conference on Neural Information Processing Systems},
  year={2025}
}

@article{hazan2026spectralfilteringcomplex,
  title   = {Spectral Filtering for Complex Linear Dynamical Systems},
  author  = {Elad Hazan and Annie Marsden},
  journal = {arXiv preprint arXiv:2601.22400},
  year    = {2026}
}

@article{rissanen1978modeling,
  title={Modeling by shortest data description},
  author={Rissanen, Jorma},
  journal={Automatica},
  volume={14},
  number={5},
  pages={465--471},
  year={1978},
  publisher={Elsevier}
}

@book{grunwald2007minimum,
  title={The minimum description length principle},
  author={Gr{\"u}nwald, Peter D},
  year={2007},
  publisher={MIT press}
}

@article{vovk2001competitive,
  title={Competitive on-line statistics},
  author={Vovk, Volodya},
  journal={International Statistical Review},
  volume={69},
  number={2},
  pages={213--248},
  year={2001},
  publisher={Wiley Online Library}
}

@article{azoury2001relative,
  title={Relative loss bounds for on-line density estimation with the exponential family of distributions},
  author={Azoury, Katy S and Warmuth, Manfred K},
  journal={Machine Learning},
  volume={43},
  number={3},
  pages={211--246},
  year={2001},
  publisher={Springer}
}

@article{hazan2007logarithmic,
  title={Logarithmic regret algorithms for online convex optimization},
  author={Hazan, Elad and Agarwal, Amit and Kale, Satyen},
  journal={Machine Learning},
  volume={69},
  number={2-3},
  pages={169--192},
  year={2007},
  publisher={Springer}
}

@book{cesa2006prediction,
  title={Prediction, learning, and games},
  author={Cesa-Bianchi, Nicol{\`o} and Lugosi, G{\'a}bor},
  year={2006},
  publisher={Cambridge university press}
}

@book{hazan2016introduction,
  title={Introduction to online convex optimization},
  author={Hazan, Elad},
  journal={Foundations and Trends{\textregistered} in Optimization},
  volume={2},
  pages={157--325},
  year={2016},
  publisher={Now Publishers, Inc.}
}

@article{dogariu2025universal,
  title={Universal learning of nonlinear dynamics},
  author={Dogariu, Evan and Brahmbhatt, Anand and Hazan, Elad},
  journal={arXiv preprint arXiv:2508.11990},
  year={2025}
}

@book{kailath1980linear,
  title={Linear Systems},
  author={Kailath, Thomas},
  year={1980},
  publisher={Prentice-Hall}
}

@book{chen1999linear,
  title={Linear System Theory and Design},
  author={Chen, Chi-Tsong},
  edition={3rd},
  year={1999},
  publisher={Oxford University Press}
}

\newpage

\appendix

\section{Coefficient growth of extremal polynomials}
\label{appendix:coeff_growth}
The following lemma shows that any monic polynomial that ensures all values in $[-1,1]$ are exponentially small, must have exponentially large coefficients. 

\begin{theorem}[Coefficient growth for flat monic polynomials]
Fix $M>0$. Let $p(x)=\sum_{j=0}^n a_j x^j,
 a_n=1$ , be a monic polynomial of degree $n$. If
\[
\max_{x\in[-1,1]} |p(x)| \leq M 2^{1-n},
\]
then
\[
\max_{0\leq j\leq n} |a_j|
\geq  \frac{1}{n} 2^{\Omega(n)} 
\]
In particular, at least one coefficient of $p$ is exponentially large in $n$.
\end{theorem}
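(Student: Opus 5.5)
The plan is to show that $p$ must be exponentially large at some point $z$ in the closed unit disk. Since $|p(z)|\le\sum_j|a_j||z|^j\le (n+1)\max_j|a_j|$ whenever $|z|\le 1$, this gives the claim together with the $1/n$ factor. The hypothesis forces $p$ to be nearly extremal (Chebyshev-like) on $[-1,1]$. Potential theory then says such a $p$ must grow like $(\rho/2)^n$ on the Bernstein ellipse $E_\rho$, for a harmonic-measure-typical point.

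\textbf{Step 1 (subharmonic comparison).} Let $\Omega=\overline{\mathbb{C}}\setminus[-1,1]$, and let $g(z)=\log|z+\sqrt{z^2-1}|$ be its Green function with pole at $\infty$, so that $g(z)=\log|2z|+o(1)$ as $z\to\infty$. Put $\varepsilon=M2^{1-n}$ and
\[
  h(z)=\log\varepsilon+n\,g(z)-\log|p(z)|.
\]
Since $\log|p|$ is subharmonic and $g$ is harmonic in $\Omega$, the function $h$ is superharmonic in $\Omega$. Because $p$ is monic, $h$ extends to $\infty$ with $h(\infty)=\log\varepsilon+n\log 2=\log(2M)$. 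On $[-1,1]$ we have $g=0$ and $|p|\le\varepsilon$, so $h\ge0$ on the boundary. By the minimum principle, $h\ge 0$ on all of $\Omega$.

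\textbf{Step 2 (averaging on an ellipse).} The level set $\{g=\log\rho\}$ is the Bernstein ellipse $E_\rho$, with semi-axes $a=(\rho+\rho^{-1})/2$ and $b=(\rho-\rho^{-1})/2$. The superharmonic mean-value inequality, applied in the exterior of $E_\rho$ (where $\infty$ is the center), gives $\int_{E_\rho}h\,d\omega_\infty\le h(\infty)=\log(2M)$. Here $\omega_\infty$ is the harmonic measure at $\infty$, i.e.\ the equilibrium measure of $E_\rho$, which in the Joukowski parametrization $z=(\rho e^{i\theta}+\rho^{-1}e^{-i\theta})/2$ is uniform in $\theta$.

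Now fix $\rho=2.2$, so that $\rho/2>1$ and $b<1$. Let $\Gamma=E_\rho\cap\{|z|\le1\}$. This is a pair of arcs around $\pm ib$, of some absolute positive measure $q=\omega_\infty(\Gamma)$, computable from the uniform-$\theta$ parametrization. Since $h\ge0$, Markov's inequality gives a point $z_0\in\Gamma$ with $h(z_0)\le \log(2M)/q$. Unwinding the definition of $h$,
\[
  |p(z_0)|\;\ge\;\varepsilon\,\rho^n(2M)^{-1/q}\;=\;2M\,(1.1)^n(2M)^{-1/q}.
\]

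\textbf{Step 3 (conclusion).} Since $|z_0|\le1$, we get $(n+1)\max_j|a_j|\ge|p(z_0)|\ge c_M\,(1.1)^n$, where $c_M=(2M)^{1-1/q}$ depends only on the fixed constant $M$. This yields $\max_j|a_j|\ge \tfrac1n 2^{\Omega(n)}$.

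The main obstacle is Step 2: choosing $\rho$ so that two competing requirements hold at once. The growth factor $\rho/2$ must exceed $1$, while the ellipse must still enter the unit disk, i.e.\ $b<1$, which requires $\rho<1+\sqrt2$. One must then lower-bound the harmonic measure $q$ of the part of the ellipse inside the disk. The choice $\rho=2.2$ satisfies both constraints, and $q$ follows from the uniform-$\theta$ parametrization: $|z|^2=\tfrac14(\rho^2+\rho^{-2}+2\cos2\theta)\le1$ on an explicit $\theta$-interval around $\pm\pi/2$. If this interval turned out to be too thin, I would shrink $\rho$ toward $2$ to enlarge it.
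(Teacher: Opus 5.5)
Your argument is correct, and it takes a genuinely different route from the paper's.

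\textbf{The paper's route.} The paper stays in the Chebyshev basis. Orthogonality gives $q=2^{n-1}p=T_n+\sum_{k<n}c_kT_k$ with $|c_k|\le 2M$. It then uses the weighted norm $\|r\|_y=\sum_j|r_j|y^j$ with $y=(\alpha-\alpha^{-1})/2$. On each $T_k$ this norm equals $|T_k(iy)|$, so it is effectively evaluation at the top of the Bernstein ellipse $E_\alpha$. The triangle inequality then shows the $T_n$ term dominates. That termwise domination needs $\alpha$ of order $M$ or larger, hence $y>1$. So the evaluation point lies outside the unit disk, and the step $\|q\|_y\le(n+1)y^n\max_j|d_j|$ gives back a factor $y^n$. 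The net growth base for the coefficients of $p$ is $\alpha^2/(\alpha^2-1)$, which tends to $1$ as $M$ grows.

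\textbf{Your route.} You replace the termwise domination by the Bernstein--Walsh comparison $h\ge 0$ plus a harmonic-measure average on an ellipse $E_\rho$ with $2<\rho<1+\sqrt2$ that dips inside the unit disk. Since the average only costs $h(\infty)=\log(2M)$, you get an absolute growth base $\rho/2=1.1$. The constant $M$ enters only through the polynomial prefactor $(2M)^{1-1/q}$ with $q\approx 0.32$.

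\textbf{What each buys.} Your bound is quantitatively stronger. It stays exponential even when $2M=e^{\gamma n}$ for a small absolute $\gamma>0$ (roughly $\gamma<0.045$ for $\rho=2.2$). Equivalently, it covers $\max_{[-1,1]}|p|\le e^{-cn}$ with $c>\log 2-\gamma$. So it settles the paper's constant-rate conjecture for $c$ in a neighborhood below $\log 2$, a regime the paper explicitly notes its normalization argument cannot reach. In exchange, the paper's proof is elementary: it needs only Chebyshev orthogonality, the closed form $\|T_k\|_y=(\alpha^k+(-1)^k\alpha^{-k})/2$, and the triangle inequality. Yours relies on the Green function of $[-1,1]$, the minimum principle for superharmonic functions on the Riemann sphere, and the identification of harmonic measure at $\infty$ on $E_\rho$ via the Joukowski map.
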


\begin{proof}
We split the proof into two parts. The first part is the conceptual reduction,
and the second part is the technical no-cancellation estimate.

\paragraph{Part I: Reduction to a no-cancellation statement.}

Write $p$ in the Chebyshev basis:
\[
p(x)=\sum_{k=0}^n b_k T_k(x),
\]
where $T_k$ is the $k$th Chebyshev polynomial. Since
\[
T_n(x)=2^{n-1}x^n+\text{lower-degree terms},
\]
and $p$ is monic, the coefficient of $T_n$ must be
\[
b_n=2^{1-n}.
\]

The remaining Chebyshev coefficients are controlled by the assumption that
$p$ is small on $[-1,1]$. Indeed, using $T_k(\cos\theta)=\cos(k\theta)$ and
orthogonality on $[0,\pi]$,
\begin{eqnarray*}
|b_k| & =  \left| \frac{2}{\pi}\int_0^\pi p(\cos\theta)\cos(k\theta)\,d\theta \right| \\
& \leq  \frac{2}{\pi}\int_0^\pi \left| p(\cos\theta) \right| \left| \cos(k\theta)  \right| d\theta \\
& \leq  \frac{2}{\pi}\int_0^\pi \left| p(\cos\theta) \right|  d\theta \\
& \leq 2 \max_{x \in [-1,1] } |p(x) | \leq 2M 2^{1-n} = 4M 2^{-n}.
\end{eqnarray*}
Thus, after scaling by $2^{n-1}$, the polynomial
\[
q(x):=2^{n-1}p(x)
\]
has the form
\[
q(x)=T_n(x)+\sum_{k=0}^{n-1} c_k T_k(x),
\qquad |c_k|\leq 2M.
\]

So the whole issue is the following: can the bounded lower-degree Chebyshev
terms cancel the coefficient growth of $T_n$? The answer is no. This is the
content of the next lemma.

\begin{lemma}[No cancellation, asymptotic form]
Fix $M>0$. Suppose
\[
q(x)=T_n(x)+\sum_{k=0}^{n-1}c_kT_k(x),
\qquad |c_k|\leq 2M.
\]
Write
\[
q(x)=\sum_{j=0}^n d_jx^j.
\]
Then
\[
\max_{0\leq j\leq n}|d_j|
\geq
\frac{1}{n+1}2^{(1+\Omega_M(1))n}.
\]
Equivalently, the coefficients of $q$ grow like $(2+\Omega_M(1))^n/n$.
\end{lemma}

\begin{proof}
Choose $\alpha>1$ sufficiently large depending only on $M$, and set
\[
y:=\frac{\alpha-\alpha^{-1}}{2}.
\]
For a polynomial $r(x)=\sum_j r_jx^j$, define
\[
\|r\|_y:=\sum_j |r_j|y^j.
\]

We use the standard Chebyshev coefficient estimate
\[
\|T_k\|_y
=
\frac{\alpha^k+(-1)^k\alpha^{-k}}{2}
=
\Theta(\alpha^k).
\]
Thus
\[
\|T_n\|_y=\Omega(\alpha^n),
\qquad
\|T_k\|_y\leq O(\alpha^k).
\]
Therefore
\[
\begin{aligned}
\|q\|_y
&\geq
\|T_n\|_y
-
\sum_{k=0}^{n-1}|c_k|\|T_k\|_y \\
&\geq
\Omega(\alpha^n)
-
O_M\left(\sum_{k=0}^{n-1}\alpha^k\right).
\end{aligned}
\]
Since
\[
\sum_{k=0}^{n-1}\alpha^k
\leq
\frac{\alpha^n}{\alpha-1},
\]
choosing $\alpha$ large enough depending on $M$ gives
\[
\|q\|_y\geq \Omega_M(\alpha^n).
\]

On the other hand, if
\[
D:=\max_{0\leq j\leq n}|d_j|,
\]
then
\[
\|q\|_y
=
\sum_{j=0}^n |d_j|y^j
\leq
D(n+1)y^n.
\]
Hence
\[
D
\geq
\frac{1}{n+1}\Omega_M\left(\left(\frac{\alpha}{y}\right)^n\right).
\]
Finally,
\[
\frac{\alpha}{y}
=
\frac{2\alpha^2}{\alpha^2-1}
>2.
\]
Therefore
\[
D
\geq
\frac{1}{n+1}2^{(1+\Omega_M(1))n}.
\]
\end{proof}

\end{proof}
 
\paragraph{A constant-rate conjecture.}
The preceding theorem proves exponential coefficient growth at the sharp
Chebyshev scale $2^{-n}$. Its proof uses this scale essentially: after
normalizing by the leading Chebyshev coefficient, the lower-degree Chebyshev
coefficients remain uniformly bounded. For a weaker bound such as $\exp(-cn)$
with $c<\log 2$, the same normalization allows the lower-degree Chebyshev
coefficients to be exponentially large, so the no-cancellation argument no
longer applies.

We nevertheless expect the qualitative obstruction to persist.

\begin{conjecture}[Coefficient growth at any exponential rate]
For every $c>0$ there exists $c'>0$ such that the following holds for all
sufficiently large $n$. If
\[
p_n(x)=x^n+\sum_{j=0}^{n-1}a_jx^j
\]
is a monic polynomial satisfying
\[
\max_{x\in[-1,1]} |p_n(x)| \le e^{-cn},
\]
then
\[
\max_{0\le j\le n}|a_j| \ge e^{c'n}.
\]
\end{conjecture}

\section{Proof of Lemma \ref{lem:faber_wedge_bounds}}
\label{appendix:faber}

Recall that Lemma \ref{lem:faber_wedge_bounds} asserts that for every $\delta\in(0,\pi)$ and $n\ge1$, there is a real monic polynomial
\[
  p_{n,\delta}(z)=z^n+\sum_{i=1}^n c_i z^{n-i}
\]
with $c_0:=1$ such that
\[
  \sup_{z\in W_{\pi-\delta}}|p_{n,\delta}(z)|
  \le
  2e^{-n\delta^2/\pi^2},
  \qquad
  \max_{0\le i\le n}|c_i|\le 100^n .
\]

\begin{proof}
\medskip\noindent\textbf{Part I: Exponentially small norm on the wedge.}

For $\theta\in(0,\pi)$, define
\[
W_\theta=\{z\in\mathbb{C}: |z|\le 1,\ |\arg(z)|\le \theta\}.
\]
We use the following sector-specific Faber estimate for circular sectors, due
to Coleman and Smith~\cite{coleman1987faber} and stated in the convenient
normalization of Gatermann, Hoffmann, and Opfer~\cite{gatermann1992faber}.
Let $F_{n,\theta}$ be the monic degree-$n$ Faber polynomial of
$W_\theta$, and let $\rho(\theta)$ be the transfinite diameter of
$W_\theta$. Coleman and Smith compute
\[
\rho(\theta)=t^{-t}(2-t)^{t-2},
\qquad
 t=\frac{\theta}{\pi}.
\]
In the notation of~\cite[Eq.~(2.6)]{gatermann1992faber}, the scaled Faber
polynomial is
\[
\widetilde F_{n,\theta}(z)=\rho(\theta)^{-n}F_{n,\theta}(z).
\]
Moreover, \cite[Eq.~(3.10)]{gatermann1992faber} gives
\[
\sup_{z\in W_\theta}|\widetilde F_{n,\theta}(z)|\le2.
\]
Consequently,
\[
\sup_{z\in W_\theta}|F_{n,\theta}(z)|
\le
2\rho(\theta)^n .
\]

Set
\[
\theta=\pi-\delta,
\qquad
\eta=\frac{\delta}{\pi}.
\]
Then
\[
t=\frac{\theta}{\pi}=1-\eta,
\]
and therefore
\[
\rho(\pi-\delta)
=
(1-\eta)^{-(1-\eta)}
(1+\eta)^{-(1+\eta)}.
\]
Thus
\[
\log\!\left(\frac{1}{\rho(\pi-\delta)}\right)
=
(1+\eta)\log(1+\eta)
+
(1-\eta)\log(1-\eta).
\]
For $0\le\eta<1$,
\[
(1+\eta)\log(1+\eta)+(1-\eta)\log(1-\eta)
=
\sum_{k=1}^{\infty}
\frac{2\eta^{2k}}{(2k)(2k-1)}
\ge
\eta^2.
\]
Hence
\[
\rho(\pi-\delta)
\le
 e^{-\eta^2}
=
e^{-\delta^2/\pi^2}.
\]
Consequently,
\[
\sup_{z\in W_{\pi-\delta}} |F_{n,\pi-\delta}(z)|
\le
2e^{-n\delta^2/\pi^2}.
\]

Finally, $W_{\pi-\delta}$ is invariant under complex conjugation. Therefore,
if necessary, replacing $F_{n,\pi-\delta}$ by
\[
p_{n,\delta}(z)
=
\frac{
F_{n,\pi-\delta}(z)
+
\overline{F_{n,\pi-\delta}(\overline z)}
}{2}
\]
preserves monicity, gives real coefficients, and does not increase the
supremum norm on $W_{\pi-\delta}$. Thus we may take $p_{n,\delta}$ to be
real and monic, with
\[
\sup_{z\in W_{\pi-\delta}} |p_{n,\delta}(z)|
\le
2e^{-n\delta^2/\pi^2}.
\]

\medskip\noindent\textbf{Part II: Coefficient bound.}

Write
\[
p_{n,\delta}(z)=z^n+\sum_{i=1}^n c_i z^{n-i},
\qquad c_0=1.
\]
Since $[0,1]\subseteq W_{\pi-\delta}$, Part I gives
\[
\sup_{x\in[0,1]} |p_{n,\delta}(x)|\le 2.
\]
Define
\[
q(x)=p_{n,\delta}\!\left(\frac{x+1}{2}\right).
\]
Then
\[
\sup_{x\in[-1,1]} |q(x)|\le 2.
\]

We use the following elementary fact: if a degree-$n$ polynomial $q$ satisfies
$\sup_{x\in[-1,1]}|q(x)|\le2$, then the $\ell_1$ norm of its coefficient is at
most $24^n$. To see this, write $q$ in the Chebyshev basis
\[
q(x)=\sum_{k=0}^{n}b_kT_k(x).
\]
By the usual Chebyshev orthogonality formulas,
\[
b_0=\frac1\pi\int_0^\pi q(\cos\theta)\,d\theta,
\qquad
b_k=\frac2\pi\int_0^\pi q(\cos\theta)\cos(k\theta)\,d\theta
\quad (k\ge1).
\]
Since $\sup_{x\in[-1,1]}|q(x)|\le2$, it follows that
\[
|b_k|\le4,
\qquad 0\le k\le n.
\]
Also, the recurrence
\[
T_{k+1}(x)=2xT_k(x)-T_{k-1}(x)
\]
implies that the monomial coefficient $\ell_1$ norm of $T_k$ is at most
$3^k$. Hence
\[
\|q\|_1
\le
\sum_{k=0}^{n}|b_k|\,\|T_k\|_1
\le
4(n+1)3^n
\le
24^n .
\]

Finally,
\[
p_{n,\delta}(z)=q(2z-1).
\]
Substituting $2z-1$ increases the monomial coefficient $\ell_1$ norm by at
most $3^n$, since
\[
\|(2z-1)^j\|_1=3^j\le3^n
\qquad
\text{for all } j\le n.
\]
Therefore
\[
\sum_{i=0}^{n}|c_i|
=
\|p_{n,\delta}\|_1
\le
3^n\|q\|_1
\le
72^n
\le
100^n.
\]
In particular,
\[
\max_{0\le i\le n}|c_i|\le100^n.
\]
Combining Parts I and II proves the lemma.
\end{proof}

\end{document}